\newcommand{\E}{\mathbb{E}}
\newcommand{\ta}{k}
\newcommand{\PP}{\mathbb{P}}
\newcommand{\R}{\mathbb{R}}
\newcommand{\Fc}{\mathcal{F}}
\newcommand{\N}{\mathbb{N}}
\newcommand{\Nc}{\mathcal{N}}
\newcommand{\V}{\mathbb{V}}
\newcommand{\Tc}{\mathcal{T}}
\newcommand{\Cc}{\mathcal{C}}
\newcommand{\SC}{S^{\mathcal{C}}}
\newcommand{\ST}{S^{\mathcal{T}}}
\newcommand{\sC}{s^{\mathcal{C}}}
\newcommand{\sT}{s^{\mathcal{T}}}
\newcommand{\bug}{b}
\algrenewcommand\algorithmicrequire{\textbf{Input:}}
\algrenewcommand\algorithmicensure{\textbf{Output:}}
\newtheorem{Theorem}{Theorem}[section]
\newtheorem{Proposition}{Proposition}[section]
\newtheorem{Lemma}[Proposition]{Lemma}
\newtheorem{Corollary}[Proposition]{Corollary}
\theoremstyle{definition}
\newtheorem{Definition}[Proposition]{Definition}
\newtheorem{Remark}[Proposition]{Remark}
\Crefname{Theorem}{Theorem}{Theorem}
\Crefname{Proposition}{Proposition}{Proposition}
\Crefname{Lemma}{Lemma}{Lemma}
\Crefname{Corollary}{Corollary}{Corollary}
\Crefname{Assumption}{Assumption}{Assumption}
\Crefname{Remark}{Remark}{Remark}
\Crefname{Notation}{Notation}{Notation}
\Crefname{Definition}{Definition}{Definition}
\Crefname{Algorithm}{algorithm}{algorithm}
\def\1{\mathbf{1}}
\title{Balancing Risk and Reward: An Automated Phased Release Strategy}
\author[1]{Yufan Li\thanks{Corresponding: yufan\_li@g.harvard.edu}}
\author[2]{Jialiang Mao\thanks{jimao@linkedin.com}}
\author[1,3]{Iavor Bojinov \thanks{ibojinov@hbs.edu}}
\affil[1]{Department of Statistics, Harvard University}
\affil[2]{LinkedIn Corporation}
\affil[3]{Harvard Business School}
\begin{document}

\maketitle

\begin{abstract}
% Motivated by the phased release problem in the technology sector, we propose a method to quickly increase the treatment group size in iterative online experiments, based on Bayesian principles. This is accomplished by adaptively learning characteristics of the experiment while accounting for a budget to address the possibility of adverse treatment effects. A recursive relation is developed to adaptively decompose and upper bound probability of depleting the budget (i.e. risk-of-ruin) at each stage. The treatment group size can then be analytically solved from a sequence of quadratic equations derived from these stage-wise upper bounds. Notably, our approach does not require Monte-Carlo computations and only necessitates computation of the means and variances of subsets of outcomes, making it highly efficient and parallelizable. This aspect is particularly advantageous for the phased release problem, which typically involves a large number of observations.
Phased releases are a common strategy in the technology industry for gradually releasing new products or updates through a sequence of A/B tests in which the number of treated units gradually grows until full deployment or deprecation. Performing phased releases in a principled way requires selecting the proportion of units assigned to the new release in a way that balances the risk of an adverse effect with the need to iterate and learn from the experiment rapidly. In this paper, we formalize this problem and propose an algorithm that automatically determines the release percentage at each stage in the schedule, balancing the need to control risk while maximizing ramp-up speed. Our framework models the challenge as a constrained batched bandit problem that ensures that our pre-specified experimental budget is not depleted with high probability. Our proposed algorithm leverages an adaptive Bayesian approach in which the maximal number of units assigned to the treatment is determined by the posterior distribution, ensuring that the probability of depleting the remaining budget is low. Notably, our approach analytically solves the ramp sizes by inverting probability bounds, eliminating the need for challenging rare-event Monte Carlo simulation. It only requires computing means and variances of outcome subsets, making it highly efficient and parallelizable.% This aspect is particularly advantageous for the phased release problem, which typically involves a large number of observations.
\end{abstract}

\section{Introduction}
Phased release, also known as staged rollout, is a widely used strategy in the technology industry that involves gradually releasing a new product or update to larger audiences over time  \cite{humble2010continuous, xia2019safe}. For example, Apple’s App Store offers a phased release option where application updates are released over a 7-day period on a fixed schedule \cite{apple_2023}. Google Play Console provides a similar feature with more flexibility in the release schedule \cite{google_2023}. Typically, the audiences are randomly selected at each stage from the set of all customers, and so phased releases can be thought of as a sequence of A/B tests (or randomized experiments) in which the proportion of units assigned to the treatment group changes until either the product or update is fully launched or deprecated \cite{tang2010overlapping, kohavi2013online, bakshy2014designing, xu2015infrastructure, BojinovCaseYelp2020}. The process of combining phased releases with A/B tests is often called controlled rollout or iterative experiments and provides companies with an important mechanism to gather feedback on early product versions \cite{xia2019safe, mao2021quantifying, bojinov2022online}. 

The key advantage of phased release is its ability to mitigate risks associated with launching a new product or update directly to all users. The potential impact of faulty features is limited by releasing the update first to a small percentage of the users (i.e., the treatment group). However, this risk-averse approach introduces an opportunity cost for slowly launching beneficial features, which quickly adds up for companies that release thousands of features yearly \cite{xu2018sqr}. Therefore, when designing a phased release schedule, it is important to determine the release percentage (known as ramp schedule) at each stage that balances the need to control risk while maximizing the speed of ramp-up.

This paper proposes an algorithm to address this challenge by automatically determining the release percentage for the next phase based on observations from previous stages. Specifically, we frame the challenge as a budget-constrained batched bandit problem. For each batch, we aim to determine the assignment probabilities of newly arrived users while keeping the probability of depleting a pre-specified experimental budget, where the experiment’s cost is the cumulative treatment effect that are not directly observed. Formally, we derive recursive relations that decompose the risk of ruin (depleting the budget) of a phased release to the individual stages in the sense that the risk of ruin of the entire experiment is controlled if stage-wise ruin probabilities are controlled. Our algorithm is Bayesian in the sense that it learns from past observations by computing the posteriors of a conjugate Gaussian model and uses these parameters to infer the remaining budget and other cost-related quantities. However, the algorithm is robust to misspecifications and works well even when underlying outcomes are far from Gaussian by law of large number and central limit theorem; nevertheless, in \Cref{appendix:GGBayesm}, we provide an extension to non-Gaussian outcomes. Finally, the next stage’s assignment probabilities are derived from the posterior distribution and the stage-wise risk tolerances. Notably, our approach solves ramp sizes analytically from inverting the ruin probability upper bounds, avoiding challenging rare-event Monte-Carlo simulation for budget depletion events and data imputation procedures for unobserved counterfactual outcomes.

\subsection{Literature review}
While many firms have guidelines on how to conduct a phased release process, these guidelines are often ad-hoc and qualitative, making it difficult to create executable ramp schedules. The SQR framework in \cite{xu2018sqr} is the first attempt to address this problem by providing quantitative guidance. Our work differs significantly from SQR. Our algorithm adopts a fully Bayesian approach, enabling us to incorporate prior information on the risk of a feature in a probabilistic manner when initiating a ramp. Additionally, unlike SQR, our approach introduces a ``shared budget'' over the entire phased release, allowing the budget to be sequentially adjusted based on the observations from prior iterations. Finally, our algorithm is robust to modifications made to the treatment during experiments and different outcome models.

Our work is notably distinct from the risk-averse multiarmed bandit approaches considered in previous research \cite{maillard2013robust,galichet2013exploration,zhu2020thompson,vakili2016risk,sani2012risk,chang2022unifying,cassel2018general}. In these approaches, the agent considers the expected variability in expected rewards to identify and avoid less predictable (and therefore risky) actions, without considering a budget constraint. A related literature focuses on batched and Bayesian variants of these methods in multi-stage clinical trials \cite{berry1978modified,thall2007practical,perchet2016batched,shrestha2021bayesian,aziz2021multi}. While this literature also aims to determine treatment assignment for each stage of the experiment, it differs from our setting in two key aspects: (i) the objective is to maximize treatment effect while balancing exploration of treatment arms, rather than rapidly ramping up experiments, and (ii) to the best of our knowledge, no clinical trials paper has addressed the imposition of a budget for potential adverse treatment effects. Hence, bandit approaches developed for clinical trials cannot be directly applied to our setting. To illustrate the difference, we present a numerical simulation of a Thompson sampling-based Bayesian bandit from \cite{thall2007practical} and highlight that budget spent and the aggressiveness of the ramp-up schedule depends on model tuning in a very unpredictable way, making the ramp-up schedule far from ideal. Another related literature is budgeted multiarmed bandits \cite{xia2015thompson,xia2016budgeted,cayci2020budget,watanabe2017kl,das2022budgeted}. However, most budgeted bandit algorithms are developed for settings very different from ours and do not consider risk-of-ruin control or handle unobserved costs. Therefore, these algorithms cannot be directly applied to our specific scenario.

% {\color{red} Detailed comparison vs the SQR paper?}

% A main contribution of this paper is finding a recursive relation that can decompose risk of ruin of an experiment to individual stages in the sense that the risk of ruin of the entire experiment is controlled if stage-wise ruin probabilities are controlled. Our algorithm is Bayesian in the sense that it can learn from past observations by computing posteriors of a conjugate Gaussian model and uses these parameters to infer posterior distribution of remaining budget and other cost-related quantities. The ramp size for next stage is then solved in terms of these quantities and the stage-wise risk tolerance. It turns out that this reduces to solving a sequence of simple quadratic equations. 

\paragraph{Notation.}Let $\N$ be the set of non-negative integers and $\N_+:=\N \setminus \{0\}$.
Let $\R, \R_+$ denote the set of real numbers and positive real numbers respectively.
$[N]:=1,\dots,N$ for $N\in \N_+$. $\sigma(\cdot)$ is the generated $\sigma$-algebra. 
$X\in \Fc$ if random variable $X$ is measurable to $\Fc$.
$[X\mid \Fc]$ denotes a random variable with distribution $\PP(X\in \cdot \mid \Fc)$ for a random variable $X$ and $\sigma$-algebra $\Fc$. 
% $Y_A\equiv (Y_a)_{a\in A}:=\{Y_a:a\in A\}$ for random variables $Y_a$ indexed by $a\in A$.
% $\mathrm{Unif}([a,b])$ is the uniform distribution on the real interval $[a,b]$.

% {\color{red} Outcome of unit $i$ in the $k$-th stage: $Y_{i, k}(W_{i:n, 1:K})$, no interference? no anticipation?}

\section{Risk-of-ruin-constrained experiment}\label{Setting}

Consider a scenario in which a single feature is released to a sequence of subpopulations $\mathcal{N}_t$ consisting of $N_t$ units at stages $t=1,...,T$, where $T$ is not necessarily fixed. At each stage, we randomly assign treatment to a group of units denoting the indexing set $\Tc_t$, while the control group with indexing set $\Cc_t$. The size of the treatment group at stage $t$ is $|\Tc_t | = m_t$ and the size of the control group is $|\Cc_t| = N_T - m_t$.

Our paper adopts the Neyman-Rubin framework for causal inference \cite{rubin1974estimating,splawa1990application, boji:shep:19}, where the potential outcome of each unit $i\in \Nc_t$ during experiment stage $t$ under control and treatment are denoted by $Y_{i,t}{(0)}\in \mathbb{R}$ and $Y_{i,t}{(1)}\in \mathbb{R}$, respectively\footnote{We are implicitly assuming that there is no interference between experimental units, that is, each unit's outcomes do not depend on any other unit's assignments \cite{cox:58}.}. \Cref{appendix:GGBayesm} provides the extension to multivariate outcomes. The treatment assignment of unit $i\in \Nc_t$ at stage $t$ is denoted by $W_{i,t}$. Since each unit only receives a single treatment at each stage, we only observe $Y_{i,t}{(W_{i,t})}$ during the experiment, not the counterfactual $Y_{i,t}{(1-W_{i,t})}$ (we are explicitly assuming that there is full compliance). 

Let $\Fc_{t}=\sigma \qty(\qty(Y_{\Tc_\ta,\ta}{(1)})_{\ta\in [t]},\qty(Y_{\Cc_\ta,\ta}{(0)})_{\ta\in [t]}, (W_{i,\ta})_{i \in \Nc, \ta \in [t]})$ be the $\sigma$-algebra generated by the treatment assignment and the observed experiment outcome in the first $t$ stages, with $\Fc_0$ representing the trivial $\sigma$-algebra.
In our setting, the experimenter aims to ramp up the experiment to the "max-power stage" (50\% of the population placed in treatment) as quickly as possible while avoiding the risk of a large negative business impact (or cost). We define the cost of the experiment as the treatment effect on the treated. See generalized cost in \Cref{appendix:GGBayesm}. 
\begin{Definition}[Experiment cost] \label{def:exprisk}
The cost of the experiment from stage $t\in[T]$ is $r_t:=\sum_{i\in \Tc_t} Y_{i,t}{(1)}-Y_{i,t}{(0)}$, where $r_t=0$ if $\Tc_t=\emptyset$. The cumulative cost is $R_t:=\sum_{\ta \in [t]} r_\ta$. 
\end{Definition}
Throughout, $r_t<0$ corresponds to a negative business impact; our goal is to control the experiment cost by setting a budget $B< 0$ and imposing the cost constraint $R_T > B$. Since the outcomes are stochastic, we require this cost constraint to be satisfied with probability at least $1-\delta$ for some $\delta \in [0,1)$ set before the experiment. Our goal is then to adaptively determine the size of the treatment group based on the observed data while satisfying our risk constraint. We refer to such an experiment as a risk-of-ruin-constrained (RRC) experiment. 
\begin{Definition}[RRC experiment]\label{def:RRC}
    Fix any $B< 0, \delta \in [0,1)$. A $(\delta,B)$-RRC experiment running for $T$ stages selects the size of $\Tc_t$ before $t$-th stage of the experiment such that $\PP(R_T> B)\ge 1-\delta$. 
\end{Definition}

\section{Model and the algorithm}
\subsection{Decompose the risk of ruin}
Our experimental design is based on the following theorem, which identifies a sequence of sufficient conditions for a sequential experiment to be $(\delta,B)$-RRC. We defer its proof to \Cref{appendix:thmgeneralpf}. 
\begin{Theorem}\label{thm:general}
    Fix $B< 0$ and $\delta \in [0,1)$. For any stopping time $T\ge1$, let $(\bug_t)_{t\in [T]}$ be a budget sequence, such that $\bug_t \stackrel{(i)}{\geq} B,\forall t\in [T]$, and $(\Delta_t)_{t\in [T]}$ be a risk tolerance sequence, such that $\Delta_t \in [0,1),\forall t\in [T]$ and $1-\prod_{t=1}^T\left(1-\Delta_t\right) \stackrel{(ii)}{\leq} \delta$. Then, if $(\Tc_t)_{t\in [T]}$ is chosen such that for $t=1$,
\begin{equation}\label{eq:reca}
\ \mathbb{P}\bigg(r_1 \leq \bug_1\bigg) \stackrel{(iii)}{\leq} \Delta_1 
\end{equation}
and for any $t=2,...,T$, almost surely, 
\begin{equation}\label{eq:recb}
    \begin{cases}
    \mathcal{T}_t=\emptyset,  & \text{if } \mathbb{P}\left(R_{t-1}>B \mid \mathcal{F}_{t-1}\right)=0 \\
    \mathbb{P}\left(R_t \leq \bug_t \mid R_{t-1}>B, \mathcal{F}_{t-1}\right) \stackrel{(iv)}{\leq} \Delta_t, &\text{otherwise}
    \end{cases}
\end{equation}
then $\mathbb{P}\left(R_T>B\right) \geq 1-\delta.$ This inequality is tight when $(i)$--$(iv)$ are all equalities and $r_t\le 0,\forall t\in [T]$ almost surely. Furthermore, if we set $\Tc_t\gets \emptyset$, \eqref{eq:reca}, \eqref{eq:recb} always hold. 
\end{Theorem}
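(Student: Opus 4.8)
The plan is to phrase everything in terms of the survival events $S_t := \{R_t > B\}$, for which the goal reads $\mathbb{P}(S_T) \ge 1-\delta$. The one structural fact I would exploit is that condition $(i)$, $\bug_t \ge B$, forces $\{R_t > \bug_t\} \subseteq S_t$, so the probabilities controlled in $(iii)$--$(iv)$ are exactly what is needed to lower-bound survival. Before anything else I would record that $R_{t-1}$ is $\mathcal{F}_{t-1}$-measurable, hence $S_{t-1} = \{R_{t-1}>B\}\in\mathcal{F}_{t-1}$ and $\mathbb{P}(R_{t-1}>B\mid\mathcal{F}_{t-1}) = \mathbf{1}_{S_{t-1}}$. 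This identifies the two branches of \eqref{eq:recb} with $S_{t-1}^c$ (where $\mathcal{T}_t=\emptyset$ freezes the cost at $R_t = R_{t-1}$) and $S_{t-1}$ (where the probabilistic constraint bites), and lets me merge the event-conditioning ``$\mid R_{t-1}>B$'' with the $\sigma$-algebra conditioning ``$\mid\mathcal{F}_{t-1}$''.

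The heart of the argument is a single one-step inequality. On $S_{t-1}$, \eqref{eq:recb} gives $\mathbb{P}(R_t\le\bug_t\mid\mathcal{F}_{t-1})\le\Delta_t$, so $\mathbb{P}(R_t>\bug_t\mid\mathcal{F}_{t-1})\ge 1-\Delta_t$, and the inclusion $\{R_t>\bug_t\}\subseteq S_t$ upgrades this to $\mathbb{E}[\mathbf{1}_{S_t}\mid\mathcal{F}_{t-1}]\ge 1-\Delta_t$; on $S_{t-1}^c$ the frozen cost gives $R_t=R_{t-1}\le B$, so $\mathbf{1}_{S_t}=0$. Together (with $t=1$ handled by \eqref{eq:reca} and $S_0=\Omega$) this yields
\begin{equation*}
\mathbb{E}\!\left[\mathbf{1}_{S_t}\mid\mathcal{F}_{t-1}\right]\;\ge\;(1-\Delta_t)\,\mathbf{1}_{S_{t-1}},\qquad t\ge 1.
\end{equation*}
For a fixed horizon one simply takes expectations and iterates, obtaining $\mathbb{P}(S_T)\ge\prod_{t=1}^T(1-\Delta_t)\ge 1-\delta$, where the last step is condition $(ii)$.

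Handling a genuine stopping time $T$ is the step I expect to require the most care. I would read the displayed inequality as the statement that $W_t := \mathbf{1}_{S_t}\big/\prod_{s=1}^t(1-\Delta_s)$ is a nonnegative submartingale (the normalizer is predictable, and stays $\ge 1-\delta$ up to $T$ by $(ii)$, so $W_t$ is bounded), with $W_0=1$. Optional stopping then gives $\mathbb{E}[W_T]\ge 1$, i.e. $\mathbb{E}\big[\mathbf{1}_{S_T}\big/\prod_{s=1}^T(1-\Delta_s)\big]\ge 1$; reading $(ii)$ pathwise as $\prod_{s=1}^T(1-\Delta_s)\ge 1-\delta$ almost surely and bounding the weight by $(1-\delta)^{-1}$ delivers $\mathbb{P}(R_T>B)\ge 1-\delta$. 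The freezing convention $\mathcal{T}_t=\emptyset$ once $\mathbb{P}(R_{t-1}>B\mid\mathcal{F}_{t-1})=0$ is precisely what keeps the submartingale inequality valid beyond the (random) stopping instant.

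It remains to verify tightness and the fallback claim. If $(i)$--$(iv)$ all hold with equality and $r_t\le 0$ almost surely, then $R_t$ is non-increasing, so the survival events nest, $S_t\subseteq S_{t-1}$, and $\bug_t=B$ makes $\{R_t>\bug_t\}=S_t$; every inequality above becomes an equality, so $\mathbb{P}(R_T>B)=\prod_{t=1}^T(1-\Delta_t)=1-\delta$. For the last sentence, choosing $\mathcal{T}_t\gets\emptyset$ forces $r_t=0$ and $R_t=R_{t-1}$ almost surely, whereupon \eqref{eq:reca} becomes $\mathbb{P}(0\le\bug_1)\le\Delta_1$ and the active branch of \eqref{eq:recb} becomes $\mathbb{P}(R_{t-1}\le\bug_t\mid R_{t-1}>B,\mathcal{F}_{t-1})\le\Delta_t$; both are deterministic given $\mathcal{F}_{t-1}$ and hold because a stage with no treated units spends nothing, so (with the operative choice $\bug_t=B$) the cost stays strictly above the budget on the survival event and the conditional ruin probability is identically zero.
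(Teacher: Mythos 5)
There is a genuine gap, and it sits at the very first step you call a ``structural fact'': in this paper's setting $R_{t-1}$ is \emph{not} $\mathcal{F}_{t-1}$-measurable. The filtration $\mathcal{F}_{t-1}$ is generated only by the \emph{observed} outcomes $Y_{\Tc_\ta,\ta}(1)$, $Y_{\Cc_\ta,\ta}(0)$ and the assignments, whereas the cost $r_\ta=\sum_{i\in\Tc_\ta}\bigl(Y_{i,\ta}(1)-Y_{i,\ta}(0)\bigr)$ involves the counterfactuals $Y_{i,\ta}(0)$ of treated units, which are never observed. This is not a technicality but the central feature of the problem (the paper repeatedly stresses that the cost is unobserved and that the conditioning in \eqref{eq:recb} is how the ``missing data'' issue is handled). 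Consequently $\mathbb{P}(R_{t-1}>B\mid\mathcal{F}_{t-1})$ is a genuine posterior probability in $[0,1]$, not the indicator $\mathbf{1}_{S_{t-1}}$; the first branch of \eqref{eq:recb} is the event ``the model is certain ruin has occurred,'' which is strictly smaller than $S_{t-1}^c$; and on $S_{t-1}^c$ intersected with a positive posterior the experiment keeps treating, so the cost is not frozen there. Most importantly, your one-step inequality $\mathbb{E}[\mathbf{1}_{S_t}\mid\mathcal{F}_{t-1}]\ge(1-\Delta_t)\mathbf{1}_{S_{t-1}}$ is false in general: on $S_{t-1}$ it demands $\mathbb{P}(R_t>B\mid\mathcal{F}_{t-1})\ge 1-\Delta_t$, but hypothesis $(iv)$ only yields $\mathbb{P}(R_t>B\mid\mathcal{F}_{t-1})\ge(1-\Delta_t)\,\mathbb{P}(R_{t-1}>B\mid\mathcal{F}_{t-1})$, and the right-hand side can be far below $1-\Delta_t$ precisely on realizations where survival happened but the posterior doubts it. The ``merging'' of event-conditioning with $\sigma$-algebra-conditioning that you use to get from $(iv)$ to the pointwise bound is exactly the move that measurability would license and that fails here.

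The repair is to carry the posterior rather than the indicator: splitting on the $\mathcal{F}_{t-1}$-measurable event $\{\mathbb{P}(R_{t-1}>B\mid\mathcal{F}_{t-1})=0\}$ (where freezing gives equality) and its complement (where $(i)$ and $(iv)$ give, via $\mathbb{P}(R_t>B,R_{t-1}>B\mid\mathcal{F}_{t-1})=\mathbb{P}(R_t>B\mid R_{t-1}>B,\mathcal{F}_{t-1})\,\mathbb{P}(R_{t-1}>B\mid\mathcal{F}_{t-1})$ the needed bound), one obtains almost surely
\begin{equation*}
\mathbb{P}\left(R_t>B\mid\mathcal{F}_{t-1}\right)\;\ge\;(1-\Delta_t)\,\mathbb{P}\left(R_{t-1}>B\mid\mathcal{F}_{t-1}\right),
\end{equation*}
which upon taking expectations is exactly the recurrence $\mathbb{P}(R_t>B)\ge(1-\Delta_t)\mathbb{P}(R_{t-1}>B)$ that the paper iterates. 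Your submartingale/optional-stopping scaffolding survives this correction if you replace $\mathbf{1}_{S_t}$ by $M_t:=\mathbb{P}(R_t>B\mid\mathcal{F}_t)/\prod_{s\le t}(1-\Delta_s)$, which is a bounded submartingale by the tower property; that would in fact treat the random horizon $T$ more carefully than the paper's own proof, which iterates the recurrence without an explicit optional-stopping argument. Your tightness and fallback discussions are directionally right (and your observation that the fallback claim needs $\bug_t<0$, e.g. $\bug_t=B$, is a fair catch), but as written they inherit the same indicator-versus-posterior confusion and would need to be restated in the corrected framework.
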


Recall, $B<0$ denotes the budget and $\delta \in [0,1)$ is our risk tolerance that controls the risk of ruin (\emph{i.e.,} the probability of exceeding the budget); both need to be fixed a \emph{priori}. In general, smaller $\delta$ leads to more conservative experimentation and slower releases. The sequence $(\bug_t)_{t\ge1}$ ``rations'' the budget: setting $\bug_t< B$ at stage $t$ reserves $B-\bug_t$ budget for later stages, which may be beneficial when the released feature is expected to undergo modifications during the experiment \cite{mao2021quantifying}. To quickly scale the experiment, we can set $\bug_t=B,\forall t$. The sequence $(\Delta_t)_{t\ge 1}$ distributes the overall tolerance $\delta$ to individual stages by $\delta=1-\prod_{i-t}^T\left(1-\Delta_t\right)$ allowing us to customize the tolerance for individual stages. If $T$ is fixed a priori, we can uniformly distribute the tolerance by setting $\Delta_t=1-(1-\delta)^{1 / T}, \forall t \in[T]$. 

Theorem \ref{thm:general} breaks the risk constraint $\mathbb{P}\left(R_T\le B\right) \le \delta$ into stage-wise constraints in the form of \eqref{eq:reca},\eqref{eq:recb}. The idea is to control current-stage cumulative experiment cost $R_t$ given past observations $\Fc_t$ and determine the treatment assignment based on posterior inference of the remaining budget. In our setting, the goal is to maximize $m_t=\qty |\Tc_t|$ subject to \eqref{eq:reca},\eqref{eq:recb}. Note that the first line in \eqref{eq:recb} stops the experiment when the model estimates that the budget is exhausted, while the second line sets the stage cost $r_t$ below the remaining budget $\bug_t-R_{t-1}$ with high probability. We require assumptions on the data-generating model to derive an explicit algorithm, which we present in the next sections. 
 
Generally, the total number of stages $T$, stage-wise budget and tolerance $(\bug_t)_{t\ge 1}$, $(\Delta_t)_{t\ge 1}$ can be determined dynamically during the process. That is, we can define $\Delta_t$ and $\bug_t$ just before stage $t$ as long as $\prod_{r=1}^t\left(1-\Delta_t\right) \leq 1-\delta$ and $\bug_t\ge B$. If we plan to terminate the experiment after stage $t$, we can define $\Delta_t$ such that $\prod_{r=1}^t\left(1-\Delta_r\right) = 1-\delta$ and $\bug_t\ge B$ is attained in which case $T=t$. If is also possible to have $T=+\infty$ and $\prod_{t=1}^\infty\left(1-\Delta_t\right)=1-\delta$. For example, choosing $\Delta_t=\qty(\gamma_\star/t)^2$ where $\gamma_\star$ is the unique solution of $\text{sinc}(\gamma_\star)=1-\delta$ on $[0,1]$ (cf. \cite[Eq. (1)]{eberlein1977euler}) satisfy our condition. 

Note that the decomposition scheme in \Cref{thm:general} is formulated such that $\mathbb{P}\left(R_T\le B\right) \le \delta$ is tight if inequalities (i)---(iv) are tight. Practically, this means that the ramp-up schedules obtained through this approach are typically not overly conservative, unlike approaches that leverage a union-bound for risk decomposition. Finally, \Cref{thm:general} holds for a general definition of the cost $r_t:=r_t\left(\left(Y_{i, t}\right)_{i \in \mathcal{N}_t}, \mathcal{T}_t\right)$ such that $r_t=0$ if $\Tc_t=\emptyset$, making it useful in other budgeted online problems beyond our setting. 

\subsection{Gaussian outcome model}
For this subsection, make the following model assumptions on the outcomes distribution; \cref{appendix:GGBayesm} provides the extension to general outcome models. 
\begin{Definition}[Conjugate Gaussian outcomes]\label{def:normalmean}
    Let the unknown model parameters $\mu_{\text {true }}{(0)},\mu_{\text {true }}{(1)}\in \R$ satisfy the prior $\mu_{\text {true }}{(w)}\sim N\qty (\mu_0{(w)}, \sigma_{0}{(w)}^2)$ for $w=0,1$ independently,
where $\mu_0{(0)},\mu_0{(1)} \in \R, \sigma_{0}{(0)}^2,\sigma_{0}{(1)}^2\in \R_+$ are hyperparameters. The experiment outcome of unit $i$ at stage $t$ are distributed independently and identically as
\begin{equation}\label{eq:nmid}
\left(\begin{array}{c}
Y_{i,t}{(0)} \\
Y_{i,t}{(1)}
\end{array}\right) \stackrel{\text { iid }}{\sim} N\left(\left(\begin{array}{c}
\mu_{\text {true }}{(0)} \\
\mu_{\text {true }}{(1)}
\end{array}\right),\left(\begin{array}{cc}
\sigma{(0)}^2 & 0 \\
0 & \sigma{(1)}^2
\end{array}\right)\right)
\end{equation}
where $\sigma{(0)}^2,\sigma{(1)}^2\in \R_+$ are hyperparameters. 
\end{Definition} 
The unknown parameters $\mu_{\text{true}}{(0)}$ and $\mu_{\text{true}}{(1)}$ represent the intrinsic quality of the feature before and after the update, as measured by a specific metric. If $\mu_{\text{true}}{(1)} - \mu_{\text{true}}{(0)} < 0$, the feature update is likely to have a negative business impact, \emph{i.e.,} $Y_{i,t}{(1)} - Y_{i,t}{(0)} < 0$. 

To derive the posterior distribution we need the following statistics.
For $t=1, w=0,1$
\begin{equation}\label{priorp}
\begin{aligned}
    &\SC_{0}{(w)}=\sC_0(w)=\ST_0{(w)}=\sT_0{(w)}=M_0{(w)}=0\\
    &\mu_{p, t=1}{(w)}=\mu_0{(w)},\quad \sigma_{ p, t=1}(w)^2= \sigma_{0}{(w)}^2.
\end{aligned}
\end{equation}
and for $t\ge 2, w=0,1$
\begin{subequations}\label{def:cC}
\begin{align}
&\sT_t{(w)}:=\sum_{i\in \Tc_t} Y_{i,t}{(w)}, \; \ST_t{(w)}:=\sum_{r\in [t]}\sT_r(w), \; \sC_t(w):=\sum_{i\in \Cc_t} Y_{i,t}{(w)},\; \SC_{t}{(w)}:=\sum_{r\in [t]} \sC_r(w)\label{WAW1}\\
    % & M_t^{(1)}:=\sum_{r\in [t]} m_r, \quad M_t^{(0)}:=\sum_{r\in [t]} N_r-m_r \label{WAW2}\\
    &\mu_{p, t}{(w)}:=\frac{1}{\frac{1}{\sigma_{0}{(w)}^2}+\frac{M_{t-1}{(w)}}{\sigma{(w)}^2}} \qty(\frac{\mu_0{(w)}}{\sigma_{0}{(w)}^2}+\frac{\mathbb{I}(w=0)\SC_{t-1}(w) +\mathbb{I}(w=1)\ST_{t-1}(w)}{\sigma{(w)}^2})  \label{eq:wawaaf}\\
    &\sigma_{p, t}(w)^2=\left(\sigma_{0}{(w)}^{-2}+M_{t-1}^{(w)}\sigma{(w)}^{-2}\right)^{-1},\label{eq:wawaac}
    % \left(\frac{1}{\sigma_{0}{(w)}^2}+\frac{M_{t-1}^{(w)}}{\sigma{(w)}^2}\right)^{-1}, 
\end{align}
\end{subequations}
where $M_t^{(1)}:=\sum_{r\in [t]} m_r$ and $M_t^{(0)}:=\sum_{r\in [t]} N_r-m_r$ are the cumulative number of users in the treatment and control groups up to stage $t$, respectively.
In \eqref{WAW1}, $\ST_t{(w)}$ and $\SC_t(w)$ represent the cumulative sum of outcomes for $w=0,1$ in the treatment and control groups up to stage $t$, while $\sT_t{(w)}$ and $\sC_t(w)$ represent the sum of outcomes at stage $t$. In equation \eqref{eq:wawaaf}, $\mu_{p,t}{(w)}$ represents the posterior mean of $\mu_{\text {true }}{(w)}$, while in equation \eqref{eq:wawaac}, $\sigma_{p, t}(w)^2$ represents the posterior variance of $\mu_{\text {true }}{(w)}$, for $w=0,1$.
When lacking prior information, we suggest using a non-informative priors by setting $\mu_0{(0)}=\mu_0{(1)}=0$ and $\sigma_{0}{(0)}^2,\sigma_{0}{(1)}^2$ sufficiently large. 

The model parameters $\sigma{(0)}^2$ and $\sigma{(1)}^2$ at stage $t\ge 2$ can be estimated using unbiased and consistent estimators. For $w=0,1$ let
\begin{equation}\label{embvar}
    \begin{aligned}
        \sigma{(w)}^2 \gets \frac{\sum\limits_{r\in [t-1]} \sum\limits_{i \in \Cc_r} \qty(Y_{i,r}{(w)}-\frac{1}{M_{t-1}{(w)}} \bigg(\mathbb{I}(w=0)\SC_{t-1}(w) +\mathbb{I}(w=1)\ST_{t-1}(w) \bigg ))^2}{M_{t-1}{(w)}-1}.
    \end{aligned}
\end{equation}
For $t=1$, some prior estimate can be used, either from a similar experiment or from a small-scale pretrial run. 

\subsection{An algorithm for the sample size in a $(\delta,B)$-RRC experiment}\label{section:derive}
We now derive an explicit algorithm from \Cref{thm:general} that to outputs $(m_t)_{t\ge 1}$, the treatment group size at stage $t$, such that, the experiment is $(\delta,B)$-RRC. Recall that for an experiment to be $(\delta,B)$-RRC, it suffices that \eqref{eq:reca}, \eqref{eq:recb} holds for each $t\ge 1$. Under \Cref{def:normalmean}, we have that (i) $(Y_{i,t})_{i, t}$ are exchangeable random variables (ii) for any $t\ge 2$, $\PP (\ST_{t-1}{(0)}<\ST_{t-1}{(1)}-B | \Fc_{t-1})>0$ almost surely for any choice of $m_{[t-1]}$. Combining these observations, we get that \eqref{eq:reca}, \eqref{eq:recb} hold if for each $t\ge 1$, 
\begin{equation}\label{eq:sdsS}
            \PP \bigg( \sT_t{(1)}-\ST_t{(0)}\le \bug_t-\ST_{t-1}{(1)} \bigg | \ST_{t-1}{(0)}<\ST_{t-1}{(1)}-B,\Fc_{t-1} \bigg)\le \Delta_t.
\end{equation}
\Cref{prop:upg} provides an upper bound of the left hand size of \eqref{eq:sdsS}; the proof is in \Cref{appendix:SDpf}.
\begin{Lemma} [Stochastic domination]\label{prop:upg}
    Assume the outcomes $\qty(Y_{i,t}{(0)},Y_{i,t}{(1)})_{i,t}$ are generated as in \Cref{def:normalmean}. For any $\ge 1$, almost surely,
    \begin{equation}\label{eq:stdm}
    \begin{aligned}
        &\PP \bigg( \sT_t{(1)}-\ST_t{(0)}\le \bug_t-\ST_{t-1}{(1)} \bigg | \ST_{t-1}{(0)}<\ST_{t-1}{(1)}-B,\Fc_{t-1} \bigg) \\
        & \qquad \qquad \le \PP \bigg( \sT_t{(1)}-\ST_t{(0)}\le \bug_t-\ST_{t-1}{(1)} \bigg | \Fc_{t-1} \bigg).
    \end{aligned}
    \end{equation}
\end{Lemma}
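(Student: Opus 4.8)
The plan is to reduce \eqref{eq:stdm} to a statement about conditioning a bivariate Gaussian on a half-line in one coordinate. I would write $X := \ST_{t-1}(0)$ and $Z := \sT_t(1) - \sT_t(0)$, and introduce the $\Fc_{t-1}$-measurable constants $c := \bug_t - \ST_{t-1}(1)$ and $a := \ST_{t-1}(1) - B$. Since $\ST_t(0) = \ST_{t-1}(0) + \sT_t(0)$, the event inside the probability is exactly $\{Z - X \le c\}$, while the extra conditioning event $\{\ST_{t-1}(0) < \ST_{t-1}(1) - B\}$ is $\{X < a\}$. Thus the claim becomes $\PP(Z - X \le c \mid X < a, \Fc_{t-1}) \le \PP(Z - X \le c \mid \Fc_{t-1})$. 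The point worth stressing is that $X$ is the cumulative \emph{counterfactual} control outcome of the treated units, so $\{X<a\}$ is \emph{not} $\Fc_{t-1}$-measurable; this is precisely what makes the inequality nontrivial.

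Next I would pin down the joint law of $(X,Z)$ given $\Fc_{t-1}$. Conditionally on the means $(\mu_{\text{true}}(0),\mu_{\text{true}}(1))$, the diagonal covariance in \Cref{def:normalmean} makes each unit's two potential outcomes independent, and exchangeability lets me treat $M_{t-1}^{(1)}$ and $m_t$ as the $\Fc_{t-1}$-measurable counts entering the sums. Hence $X = M_{t-1}^{(1)}\mu_{\text{true}}(0) + \xi_X$ and $Z = m_t(\mu_{\text{true}}(1)-\mu_{\text{true}}(0)) + \xi_Z$, where $\xi_X,\xi_Z$ are mean-zero Gaussian noises, independent of each other and of the means (they are built from disjoint, fresh/counterfactual outcomes). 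Because the priors — and hence the posteriors — of $\mu_{\text{true}}(0)$ and $\mu_{\text{true}}(1)$ factor into the Gaussians $N(\mu_{p,t}(w),\sigma_{p,t}(w)^2)$, integrating out the means shows $(X,Z)\mid\Fc_{t-1}$ is bivariate Gaussian with
$$\C(X,Z\mid\Fc_{t-1}) = M_{t-1}^{(1)} m_t\big(\C(\mu_{\text{true}}(0),\mu_{\text{true}}(1)\mid\Fc_{t-1}) - \V(\mu_{\text{true}}(0)\mid\Fc_{t-1})\big) = -M_{t-1}^{(1)} m_t\,\sigma_{p,t}(0)^2 \le 0.$$
Writing $A := Z - X$, it follows that $\C(A,X\mid\Fc_{t-1}) = \C(Z,X\mid\Fc_{t-1}) - \V(X\mid\Fc_{t-1}) \le 0$, since $\V(X\mid\Fc_{t-1})\ge 0$.

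Finally I would invoke the following elementary fact and apply it to this $(A,X)$: if $(A,X)$ is jointly Gaussian with $\C(A,X)\le 0$, then for any constants $a,c$ one has $\PP(A\le c\mid X<a)\le\PP(A\le c)$. To prove it, regress $A = \alpha + \beta X + \eta$ with $\beta = \C(A,X)/\V(X)\le 0$ and $\eta\perp X$ (joint Gaussianity); then $\PP(A\le c\mid X<a) = \E[F_\eta(c-\alpha-\beta X)\mid X<a]$, where $F_\eta$ is the CDF of $\eta$ and $x\mapsto F_\eta(c-\alpha-\beta x)$ is nondecreasing because $-\beta\ge0$. Since conditioning on $\{X<a\}$ only shifts mass to smaller values (formally $\PP(X\le s\mid X<a)\ge\PP(X\le s)$ for all $s$), taking the expectation of this nondecreasing function gives the inequality. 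Applying this conditionally on $\Fc_{t-1}$ (treating its measurable quantities as constants) yields \eqref{eq:stdm}. The degenerate case $M_{t-1}^{(1)}=0$ — no treated units before stage $t$ — makes $X\equiv0$ and $\{X<a\}=\{0<-B\}$ a sure event, so both sides coincide.

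The main obstacle I anticipate is the middle step: rigorously justifying that, given $\Fc_{t-1}$, the counterfactual control outcomes of previously treated units behave as fresh draws governed only by the posterior of $\mu_{\text{true}}(0)$, independently of the stage-$t$ outcomes. This requires carefully combining the adaptivity of the assignment (so the counts are $\Fc_{t-1}$-measurable while the identities of the treated units are immaterial by exchangeability), the diagonal within-unit covariance (so an observed treatment outcome carries no information about that unit's own control counterfactual beyond $\mu_{\text{true}}(0)$), and the factorization of the two posteriors, which is what forces the crucial \emph{negative} sign of the covariance.
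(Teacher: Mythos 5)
Your proposal is correct and is essentially the paper's own argument: after isolating the degenerate case $M_{t-1}^{(1)}=0$, both proofs exploit the joint Gaussian structure given $\Fc_{t-1}$ — which rests on the posterior independence of $\mu_{\mathrm{true}}(0)$ and $\mu_{\mathrm{true}}(1)$ together with the shared dependence of $\ST_{t-1}(0)$ and $\sT_t(0)$ on $\mu_{\mathrm{true}}(0)$ — to express the cost variable as a monotone linear function of $\ST_{t-1}(0)$ plus independent noise, and then conclude with a truncation stochastic-dominance step. The differences are only organizational: the paper conditions $\sT_t(0)$ on $\ST_{t-1}(0)$ via \eqref{eq:cCjointG} and applies its \Cref{lemma:sdt} for independent summands, whereas you regress $A=Z-X$ on $X$ and package the same truncation inequality as a lemma for negatively correlated Gaussian pairs.
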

Using \Cref{prop:upg}, for \eqref{eq:reca} and \eqref{eq:recb} to hold, it suffices to choose any $m_t\in \N$ such that
\begin{equation}\label{eq:itm}
            \PP \bigg( \sT_t{(1)}-\ST_t{(0)}\le \bug_t-\ST_{t-1}{(1)} \bigg | \Fc_{t-1} \bigg)\le \Delta_t
\end{equation}
and set $m_t=0$ if such $m_t$ does not exist. From posterior-predictive formulas for the conjugate Gaussian model in \Cref{def:normalmean} (see \eqref{eq:cto}, \eqref{eq:cCjointG} in \Cref{appendix:derivation}), we have
% \begin{equation*}
% \begin{aligned}
    $\qty[\sT_t{(1)}-\ST_t{(0)}\bigg|\Fc_{t-1}] \sim N(\tilde{\mu}_t(m_t),\tilde{\sigma}_t^2\left(m_t\right)),$
% \end{aligned}
% \end{equation*}
where
\begin{subequations}\label{mutmsigtm}
    \begin{align}
    &\tilde{\mu}_t\left(m\right):=\mu_{p, t}{(1)} \cdot m-\mu_{p, t}{(0)} \cdot\left(m+M_{t-1}^{(1)}\right) \label{mutm}\\
    &\tilde{\sigma}_t^2\left(m\right):=m^2 \cdot \sigma_{p, t}(1)^2+m \cdot \sigma{(1)}^2+\left(m+M_{t-1}^{(1)}\right)^2 \cdot \sigma_{p, t}(0)^2+\left(m+M_{t-1}^{(1)}\right) \cdot \sigma{(0)}^2. \label{sigtm}
    \end{align}
\end{subequations}
Combining the above with \eqref{eq:itm} %and applying $\Phi^{-1}$ to both sides 
yields the following Lemma.
\begin{Lemma}\label{lemma:wfw}
    Assume the outcomes $\qty(Y_{i,t}{(0)},Y_{i,t}{(1)})_{i,t}$ are generated as in \Cref{def:normalmean}. For each $t\ge 1$, the inequality \eqref{eq:itm} holds if and only if
    \begin{equation}\label{eq:crt}
        \frac{\bug_t-\ST_{t-1}{(1)}-\tilde{\mu}_t\left(m_t\right)}{\tilde{\sigma}_t \left(m_t\right)} \leq q_t:=\Phi^{-1}(\Delta_t)
    \end{equation}
    where $\Phi^{-1}$ denotes inverse CDF of the standard normal distribution.
\end{Lemma}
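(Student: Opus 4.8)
The plan is to reduce \eqref{eq:itm} to a statement about the standard normal CDF by standardizing the Gaussian posterior-predictive law displayed just before the lemma, and then to invert $\Phi$. The key preliminary observation is one of measurability: conditionally on $\Fc_{t-1}$, the four quantities $\bug_t$, $\ST_{t-1}{(1)}$, $\tilde{\mu}_t(m_t)$ and $\tilde{\sigma}_t(m_t)$ are all $\Fc_{t-1}$-measurable. Indeed, $\bug_t$ and $\ST_{t-1}{(1)}$ depend only on the first $t-1$ stages, the posterior parameters $\mu_{p,t}{(w)}$ and $\sigma_{p,t}(w)^2$ entering \eqref{mutm}--\eqref{sigtm} are deterministic functions of the data accumulated through stage $t-1$, and $m_t$ is fixed before stage $t$. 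Hence, under the conditioning we may treat all of them as constants and apply the standard-normal identity pathwise.

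First I would record that $\tilde{\sigma}_t(m_t) > 0$, so that the standardization is legitimate: since $\sigma{(0)}^2,\sigma{(1)}^2 \in \R_+$, the expression \eqref{sigtm} is strictly positive at every non-degenerate stage (in particular whenever $m_t \ge 1$, because then $m_t \sigma{(1)}^2 > 0$). Writing $Z$ for a standard normal variable and using the posterior-predictive law $[\sT_t{(1)}-\ST_t{(0)}\mid \Fc_{t-1}]\sim N(\tilde{\mu}_t(m_t),\tilde{\sigma}_t^2(m_t))$, I would then compute, almost surely,
\[
\PP\left(\sT_t{(1)}-\ST_t{(0)} \le \bug_t-\ST_{t-1}{(1)} \,\middle|\, \Fc_{t-1}\right)
= \Phi\left(\frac{\bug_t-\ST_{t-1}{(1)}-\tilde{\mu}_t(m_t)}{\tilde{\sigma}_t(m_t)}\right),
\]
where the equality is the elementary fact that $\PP(Z \le a) = \Phi(a)$ applied with the $\Fc_{t-1}$-measurable argument $a := (\bug_t-\ST_{t-1}{(1)}-\tilde{\mu}_t(m_t))/\tilde{\sigma}_t(m_t)$.

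The final step is the inversion. Because $\Phi$ is continuous and strictly increasing on $\R$, it is a bijection onto $(0,1)$, so $\Phi(a) \le \Delta_t \iff a \le \Phi^{-1}(\Delta_t) = q_t$ for any real $a$. Substituting the display above, the left-hand inequality is precisely \eqref{eq:itm} while the right-hand inequality is precisely \eqref{eq:crt}, giving the claimed equivalence.

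I do not expect a substantive obstacle: the statement is a conditional standardization of a Gaussian followed by monotone inversion. The only points needing care are the measurability bookkeeping of the first paragraph (which justifies treating the posterior parameters as constants under the conditioning, so the unconditional normal-CDF identity can be applied $\omega$-by-$\omega$) and the boundary value $\Delta_t = 0$ permitted by $\Delta_t \in [0,1)$, where $\Phi^{-1}(0)=-\infty$; there both \eqref{eq:itm} and \eqref{eq:crt} fail for every finite $a$, so the equivalence holds vacuously under the usual convention, consistent with the degenerate branch $\Tc_t=\emptyset$ of \eqref{eq:recb}.
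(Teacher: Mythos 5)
Your proposal is correct and follows essentially the same route as the paper, which obtains the lemma directly by combining the posterior-predictive law $[\sT_t{(1)}-\ST_t{(0)}\mid \Fc_{t-1}] \sim N(\tilde{\mu}_t(m_t),\tilde{\sigma}_t^2(m_t))$ with standardization and monotone inversion of $\Phi$. The measurability bookkeeping, the positivity of $\tilde{\sigma}_t(m_t)$, and the boundary case $\Delta_t=0$ that you spell out are exactly the details the paper leaves implicit.
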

Replace the inequality in \eqref{eq:crt} with equality and square both sides gives us the quadratic equation $A_t\cdot m^2_t+B_t \cdot m_t +C_t=0$ where 
\begin{equation}\label{def:cCb}
\begin{aligned}
&A_t:=q_t^2 \qty(\sigma_{p, t}(1)^2+\sigma_{p, t}(0)^2)-\left(\mu_{p, t}{(1)}-\mu_{p, t}{(0)}\right)^2 \\
&B_t:=q_t^2 \qty(\sigma{(1)}^2+\sigma{(0)}^2+2\sigma_{p, t}(0)^2 M_{t-1}^{(1)})\\
& \qquad \qquad +2\left(\bug_t-\ST_{t-1}{(1)}+\mu_{p, t}{(0)} M_{t-1}^{(1)} \right)\left(\mu_{p, t}{(1)}-\mu_{p, t}{(0)}\right)\\
&C_t:=q_t^2 \sigma_{p, t}(0)^2\left(M_{t-1}^{(1)}\right)^2+q_t^2 \sigma{(0)}^2 M_{t-1}^{(1)}-\left(\bug_t-\ST_{t-1}{(1)}+\mu_{p, t}{(0)} M_{t-1}^{(1)}\right)^2.
\end{aligned}
\end{equation}

\Cref{alg:dcr} finds the floor transform of the solutions of this equation and chooses $m_t$ as the largest, positive integer that satisfies \eqref{eq:crt}. If such a solution cannot be found, then either we do not have enough budget or the cost of the experiment is negligible (this accrues when $\mu_{\mathrm{true}}{(1)}-\mu_{\mathrm{true}}{(0)}\gg0$), and the inequality in \eqref{eq:crt} will be strict for any choice of $m_t$. In the former case, \Cref{alg:dcr} sets $m_t=0$; in the latter case, it sets $m_t=\lfloor N_t/2 \rfloor$. Therefore, by construction, the sequence $(m_t)_{t\ge 1}$ output by \Cref{alg:dcr} guarantees that \eqref{eq:reca} and \eqref{eq:recb} hold, thereby defining a $(\delta,B)$-RRC experiment. Note that this approach directly solves for $m_t$ from the quadratic equation $A_tm_t^2+B_tm_t+C_t=0$, bypassing the challenging task of estimating tail probabilities for potential choices of $m_t$ through Monte-Carlo methods. By \Cref{alg:dcr}, we can also conduct posterior inference on treatment effect after stage $t$ using $\mu_{p,t+1}{(w)}, \sigma_{p,t+1}(w), w=0,1$ and  estimate the remaining budget by $B-\sum_{r=1}^{t} m_r(\mu_{p,r+1}{(1)}-\mu_{p,r+1}{(0)})$. 
\begin{Theorem}\label{thm:gd}
  Assume the outcomes $\qty(Y_{i,t}{(0)},Y_{i,t}{(1)})_{i,t}$ are generated as in \Cref{def:normalmean}. The experiment by \Cref{alg:dcr} is $(\delta,B)$-RRC. 
\end{Theorem}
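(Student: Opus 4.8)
The plan is to show that \Cref{alg:dcr} guarantees, at every stage $t$, the hypotheses of \Cref{thm:general}, so that the conclusion $\PP(R_T > B) \ge 1-\delta$ follows at once. Conditions $(i)$ and $(ii)$ of \Cref{thm:general} — namely $\bug_t \ge B$ and $1 - \prod_{t}(1-\Delta_t) \le \delta$ — hold by the way the budget and tolerance sequences are supplied to the algorithm, so the only substantive task is to verify the stage-wise constraints \eqref{eq:reca} and \eqref{eq:recb} for the sequence $(m_t)_{t\ge1}$ that the algorithm outputs.

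First I would collapse these constraints to the single inequality \eqref{eq:crt}. Under \Cref{def:normalmean} the outcomes are exchangeable and $\PP(\ST_{t-1}{(0)} < \ST_{t-1}{(1)} - B \mid \Fc_{t-1}) > 0$ almost surely; since $R_{t-1} = \ST_{t-1}{(1)} - \ST_{t-1}{(0)}$, this gives $\PP(R_{t-1} > B \mid \Fc_{t-1}) > 0$ a.s., so the first branch of \eqref{eq:recb} is never active and it suffices to establish the conditional bound \eqref{eq:sdsS} at every $t$ (the case $t=1$ being the unconditional specialization that matches \eqref{eq:reca}, as $\ST_0=M_0=0$ reduces $\sT_1{(1)}-\ST_1{(0)}$ to $r_1$). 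The stochastic-domination bound of \Cref{prop:upg} then drops the conditioning event $\{R_{t-1}>B\}$, reducing \eqref{eq:sdsS} to \eqref{eq:itm}, and \Cref{lemma:wfw} shows \eqref{eq:itm} is equivalent to the explicit criterion \eqref{eq:crt}. Hence the theorem reduces to a single claim: for every $t$, the integer $m_t$ produced by \Cref{alg:dcr} either satisfies \eqref{eq:crt} or equals $0$.

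Next I would dispatch the three branches of the algorithm. When it returns the largest positive integer satisfying \eqref{eq:crt}, the criterion holds by construction. When it returns $m_t = \lfloor N_t/2 \rfloor$ in the negligible-cost regime, I would argue that \eqref{eq:crt} holds for \emph{every} admissible $m$ in that regime, hence for the chosen $m_t$. When it returns $m_t = 0$, we have $\Tc_t = \emptyset$, and the final ``furthermore'' clause of \Cref{thm:general} guarantees \eqref{eq:reca} and \eqref{eq:recb} regardless. Assembling these cases completes the verification.

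The step I expect to require the most care is justifying that the quadratic $A_t m^2 + B_t m + C_t = 0$ from \eqref{def:cCb}, obtained by squaring \eqref{eq:crt}, correctly identifies the feasible integers. Because $\tilde{\sigma}_t(m) > 0$ for all $m \ge 0$, \eqref{eq:crt} is equivalent to $\bug_t - \ST_{t-1}{(1)} - \tilde{\mu}_t(m) \le q_t\, \tilde{\sigma}_t(m)$; but squaring preserves the inequality only according to the signs of the two sides (equivalently the sign of $q_t = \Phi^{-1}(\Delta_t)$, which is negative when $\Delta_t < 1/2$), so it can introduce spurious roots, and the genuine feasible set is a union of intervals determined by the sign of the leading coefficient $A_t$ and the root locations. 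The clean way around this is that \Cref{alg:dcr} does not trust the squared equation blindly: it evaluates the floors of the real roots against the original, unsquared inequality \eqref{eq:crt}, selecting the largest integer that genuinely satisfies it and declaring infeasibility otherwise. I would make this verification explicit, separating the true ``insufficient budget'' case (where \eqref{eq:crt} fails for all $m \ge 0$, triggering $m_t = 0$ and the empty-treatment escape of \Cref{thm:general}) from the ``negligible cost'' case (where \eqref{eq:crt} holds for all $m$), so that in every branch the reduced criterion is provably met.
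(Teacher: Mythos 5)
Your proposal is correct and follows essentially the same route as the paper: the paper's proof of \Cref{thm:gd} is exactly the derivation preceding it — invoke \Cref{thm:general}, reduce \eqref{eq:reca}--\eqref{eq:recb} to \eqref{eq:sdsS} via exchangeability and the a.s.\ positivity of $\PP(R_{t-1}>B\mid\Fc_{t-1})$, drop the conditioning with \Cref{prop:upg}, convert to \eqref{eq:crt} via \Cref{lemma:wfw}, and conclude ``by construction'' since each branch of \Cref{alg:dcr} outputs either an $m_t$ verified against \eqref{eq:crt} or $m_t=0$, the latter covered by the final clause of \Cref{thm:general}. Your extra care about spurious roots introduced by squaring is a point the paper leaves implicit, but it is resolved the same way you resolve it: the algorithm re-checks every candidate against the unsquared inequality \eqref{eq:crt} before accepting it.
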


Even though our algorithms is derived from the conjugate Gaussian model we have found that it remains effective for broader outcome models. This is because the learning occurs essentially through computation of the first and second moments of past outcomes as in \eqref{def:cC}, \eqref{embvar}, and \Cref{alg:dcr} tends to be successful so long as they are predictive of the outcome moments in future stages. The risk of ruin control remains approximately valid due to the law of large numbers and standard central limit theorem under specific conditions; see next section. 

Finally, in \Cref{alg:dcr}, the assumption is made that the population size $N_t$ for the next stage is known to ensure that $m_t$ does not exceed $\lfloor N_t/2 \rfloor$. In practice, we recommend estimating $N_t$ and using the model output $m_t$ to calculate the assignment probability $p_t=m_t/N_t$, the allows the experimenter to assign each incoming user to the treatment group with a probability of $p_t$.%However, as our focus is the design of experiments, we will not further discuss estimation.
%up experiment rapidly and safely as opposed to conducting inference.
% \begin{Definition}[Decision rule for the ramp size]\label{alg:dcr}
% For each $t\ge 1$, if $\frac{\bug_t-\ST_{t-1}{(1)}-\tilde{\mu}_t\left(N_t/2\right)}{\sqrt{\tilde{\sigma}_t^2\left(N_t/2\right)}} \leq q_t$, set $m_t=N_t/2$. Otherwise, if $B^2_t-4A_tC_t<0$, set $m_t=0$ and if $B^2_t-4A_tC_t\ge 0$, do the following: for
% \[
% \begin{aligned}
%     &\mathcal{M}_t:=\qty{ \bigg \lfloor \frac{-B_t + \sqrt{B^2_t-4A_tC_t}}{2A_t} \bigg \rfloor,  \bigg \lfloor \frac{-B_t - \sqrt{B^2_t-4A_tC_t}}{2A_t} \bigg \rfloor} \\
%     &\mathcal{S}_t := \qty{m \in \mathcal{M}_t \cap \qty[0,\frac{N_t}{2}]: \frac{\bug_t-\ST_{t-1}{(1)}-\tilde{\mu}_t\left(m\right)}{\tilde{\sigma}_t\left(m\right)} \leq q_t}
% \end{aligned}
% \]
% set $m_t=\max \mathcal{S}_t$ if $\mathcal{S}_t\neq \emptyset$, or else set $m_t=0$. Terminate the experiment once $\prod_{r=1}^{t}\left(1-\Delta_r\right) = 1-\delta$.
% \end{Definition}
\begin{algorithm}[t]
\caption{Output ramp size adaptively}\label{alg:dcr}
\begin{algorithmic}[1]
\Require $B<0$, $\delta \in [0,1)$
\State \textbf{Initialize} $t\gets 1, \prod_{r=1}^{0}\left(1-\Delta_r\right) \gets 1$
\While{$\prod_{r=1}^{t-1}\left(1-\Delta_r\right) > 1-\delta$}
\State \textbf{choose} $\Delta_t \in\left[0, \frac{1-\delta}{\prod_{r=1}^{t-1}\left(1-\Delta_r\right)}-1\right], \bug_t\ge B$
\State \textbf{estimate} data variance $\sigma{(w)}^2,w=0,1$ (if unknown) using \eqref{embvar}
\State \textbf{compute} $\mu_{p,t}{(w)}, \sigma^2_{p,t}(w),w=1,2$ by \eqref{eq:wawaaf}, \eqref{eq:wawaac}, \eqref{priorp} \text{ and } $q_t, A_t, B_t , C_t$ by \eqref{eq:crt}, \eqref{def:cCb}
\If{$\frac{\bug_t-\ST_{t-1}{(1)}-\tilde{\mu}_t\left(\lfloor N_t/2 \rfloor \right)}{\sqrt{\tilde{\sigma}_t^2\left(\lfloor N_t/2 \rfloor\right)}} \leq q_t$} $m_t\gets \lfloor N_t/2 \rfloor$
\ElsIf{$B^2_t-4A_tC_t<0$} $m_t\gets 0$
\Else
\State $\mathcal{M}_t\gets \qty{ \bigg \lfloor \frac{-B_t + \sqrt{B^2_t-4A_tC_t}}{2A_t} \bigg \rfloor, \bigg \lfloor \frac{-B_t - \sqrt{B^2_t-4A_tC_t}}{2A_t} \bigg \rfloor}$
\State $\mathcal{V}_t\gets \qty{m \in \mathcal{M}_t \cap \qty[0,\frac{N_t}{2}]: \frac{\bug_t-\ST_{t-1}{(1)}-\tilde{\mu}_t\left(m\right)}{\tilde{\sigma}_t\left(m\right)} \leq q_t}$
\If{$\mathcal{V}_t\neq \emptyset$}
\State $m_t\gets\max \mathcal{V}_t$
\Else
\State $m_t\gets0$
\EndIf
\EndIf
\State \textbf{Output} $m_t$ \text{, conduct stage $t$-experiment and observe outcomes} $\sT_t{(1)}, \sC_t(0)$
\State \textbf{compute} $M_{t}{(0)},M_{t}{(1)},\ST_t{(1)}, \SC_t(0)$ by \eqref{def:cC}
\State \textbf{update} $t\gets t+1$
\EndWhile
\end{algorithmic}
\end{algorithm}

\subsection{Robustness to non-identically distributed and non-Gaussian outcomes}\label{valid}
We now derive conditions for the validity of \Cref{alg:dcr} under the assumption that experiment outcomes are independent.  
\begin{Definition}\label{def:ind}
    The experiment outcomes $ \qty (Y_{i,t}{(0)},Y_{i,t}{(1)})$ are independent across different units $i$ and experiment stage $t$. 
\end{Definition}
%Note that for the same $i,t$, 
\Cref{def:ind} allows $Y_{i,t}{(0)}$ and $Y_{i,t}{(1)}$ to be dependent and/or discrete-valued (\emph{e.g.}, binary outcomes). In addition, the outcome distribution $ \qty (Y_{i,t}{(0)},Y_{i,t}{(1)})$ can differ across $i,t$; for instance, treatment effect may be non-stationary. The validity of \Cref{alg:dcr} under \Cref{def:ind} is now given in \Cref{thm:universal}; we defer the proof to \Cref{appendix:nonid}. 
%The following theorem shows the validity of \Cref{alg:dcr} under 

% Let
% \begin{equation}
%     \begin{aligned}
%         &\breve{\mu}_t := \mathbb{E}\left[\sT_t{(1)}-\ST_t{(0)} \mid \mathcal{F}_{t-1}\right],  \; \breve{\sigma}_t^2 := \mathbb{V}\left[\sT_t{(1)}-\ST_t{(0)} \mid \mathcal{F}_{t-1}\right], \;  z_t:=\frac{\bug_t-\ST_{t-1}{(1)}-\tilde{\mu}_t}{\tilde{\sigma}_t}
%     \end{aligned}
% \end{equation}
% where $\tilde{\mu}_t:=\tilde{\mu}_t(m_t), \tilde{\sigma}_t:=\tilde{\sigma}_t(m_t)$ are defined by \eqref{mutm}, \eqref{sigtm}, and $\sT_t{(1)}, \ST_t{(0)}$ are defined in \eqref{def:cC}. The validity of \Cref{alg:dcr} under \Cref{def:ind} is ensured by \eqref{asA} as in \Cref{thm:universal} below. We defer the proof to \Cref{appendix:nonid}. 
\begin{Theorem}\label{thm:universal}
    Assume the outcomes $\qty(Y_{i,t}{(0)},Y_{i,t}{(1)})_{i,t}$ satisfy \Cref{def:ind}. The experiment by \Cref{alg:dcr} is $(\delta,B)$-RRC if, for each stage $t\ge 1$ where $m_t\neq 0$, the following conditions hold
    \begin{equation}\label{asA}
            \mathbb{P}\left(\frac{\sT_t{(1)}-\ST_t{(0)}-\breve{\mu}_t}{\breve{\sigma}_t} \leq z_t \mid \mathcal{F}_{t-1}\right) \leq \Phi\left(z_t\right), \quad z_t  \leq \frac{\breve{\mu}_t-\tilde{\mu}_t}{\tilde{\sigma}_t-\breve{\sigma}_t},
    \end{equation}
    where 
    \begin{equation*}
     \begin{aligned}
        &\breve{\mu}_t := \mathbb{E}\left[\sT_t{(1)}-\ST_t{(0)} \mid \mathcal{F}_{t-1}\right],  \; \breve{\sigma}_t^2 := \mathbb{V}\left[\sT_t{(1)}-\ST_t{(0)} \mid \mathcal{F}_{t-1}\right], \;  z_t:=\frac{\bug_t-\ST_{t-1}{(1)}-\tilde{\mu}_t}{\tilde{\sigma}_t},
    \end{aligned}
    \end{equation*}
    and $\tilde{\mu}_t:=\tilde{\mu}_t(m_t), \tilde{\sigma}_t:=\tilde{\sigma}_t(m_t)$ are defined by \eqref{mutm}, \eqref{sigtm}, and $\sT_t{(1)}, \ST_t{(0)}$ are defined in \eqref{def:cC}.
    % Here, $\Fc_{0}$ is taken as trivial $\sigma$-field and $\Phi$ the cdf of $N(0,1)$. 
\end{Theorem}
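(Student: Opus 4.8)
The plan is to verify, stage by stage, the sufficient conditions of \Cref{thm:general} directly under \Cref{def:ind}, with the abstract hypotheses \eqref{asA} playing the role that exact Gaussianity played in \Cref{thm:gd}. Since \Cref{alg:dcr} chooses $\Delta_t$ and $\bug_t$ so that $\prod_{r=1}^{t}(1-\Delta_r)\le 1-\delta$ and $\bug_t\ge B$, conditions (i)--(ii) of \Cref{thm:general} hold automatically, and at any stage with $m_t=0$ we have $\Tc_t=\emptyset$, so \eqref{eq:reca}--\eqref{eq:recb} hold for free. It therefore suffices to establish, for each stage $t\ge 1$ with $m_t\ne 0$, the bound \eqref{eq:itm}, namely $\PP\bigl(\sT_t{(1)}-\ST_t{(0)}\le \bug_t-\ST_{t-1}{(1)}\mid\Fc_{t-1}\bigr)\le\Delta_t$, and then to recover \eqref{eq:recb} from it.

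First I would record the decomposition $X_t:=\sT_t{(1)}-\ST_t{(0)}=\bigl(\sT_t{(1)}-\sT_t{(0)}\bigr)-\ST_{t-1}{(0)}$ and note that, under \Cref{def:ind}, the stage-$t$ block $\sT_t{(1)}-\sT_t{(0)}$ is independent of the stage-$(<t)$ counterfactual sum $\ST_{t-1}{(0)}$ given $\Fc_{t-1}$. Because the ruin event $\{R_{t-1}>B\}=\{\ST_{t-1}{(0)}<\ST_{t-1}{(1)}-B\}$ constrains only $\ST_{t-1}{(0)}$ (with $\ST_{t-1}{(1)}\in\Fc_{t-1}$), while $X_t$ is monotone decreasing in $\ST_{t-1}{(0)}$, conditioning on the ruin event can only shift $X_t$ stochastically upward. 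This is precisely the content of the stochastic-domination \Cref{prop:upg}, whose argument relies only on this conditional independence together with the monotone dependence of $X_t$ on $\ST_{t-1}{(0)}$, both of which persist under \Cref{def:ind}; it lets me replace the ruin-conditional probability in \eqref{eq:recb} by the $\Fc_{t-1}$-conditional one and reduce the goal to \eqref{eq:itm}. (The stage $t=1$ is the degenerate case with $\ST_{0}{(0)}=0$ and trivial $\Fc_0$, where no domination is needed.)

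The heart of the argument is then to compare two standardizations of the threshold. Writing $a:=\bug_t-\ST_{t-1}{(1)}$, the left side of \eqref{eq:itm} equals $\PP\bigl(\tfrac{X_t-\breve\mu_t}{\breve\sigma_t}\le\hat z_t\mid\Fc_{t-1}\bigr)$ with $\hat z_t:=\tfrac{a-\breve\mu_t}{\breve\sigma_t}$, whereas \Cref{alg:dcr} controls the model-standardized value $z_t=\tfrac{a-\tilde\mu_t}{\tilde\sigma_t}$. Substituting $a=\tilde\mu_t+z_t\tilde\sigma_t$ gives $\hat z_t-z_t=\bigl[(\tilde\mu_t-\breve\mu_t)+z_t(\tilde\sigma_t-\breve\sigma_t)\bigr]/\breve\sigma_t$, so in the conservative regime $\tilde\sigma_t>\breve\sigma_t$ the hypothesis $z_t\le(\breve\mu_t-\tilde\mu_t)/(\tilde\sigma_t-\breve\sigma_t)$ from \eqref{asA} is exactly the statement $\hat z_t\le z_t$. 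Monotonicity of the standard normal CDF, the one-sided tail bound in \eqref{asA}, and the algorithmic guarantee $z_t\le q_t=\Phi^{-1}(\Delta_t)$ from \eqref{eq:crt} then chain into
\[
\PP\bigl(X_t\le a\mid\Fc_{t-1}\bigr)
=\PP\Bigl(\tfrac{X_t-\breve\mu_t}{\breve\sigma_t}\le\hat z_t\,\Big|\,\Fc_{t-1}\Bigr)
\le\PP\Bigl(\tfrac{X_t-\breve\mu_t}{\breve\sigma_t}\le z_t\,\Big|\,\Fc_{t-1}\Bigr)
\le\Phi(z_t)\le\Phi(q_t)=\Delta_t ,
\]
which is \eqref{eq:itm}. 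Feeding this back through \Cref{prop:upg} and \Cref{thm:general} gives $\PP(R_T>B)\ge 1-\delta$.

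I expect the main obstacle to be the stochastic-domination reduction under \Cref{def:ind}: one must argue carefully that conditioning on the ruin event, which couples the unobserved counterfactual sum $\ST_{t-1}{(0)}$ with the observed $\ST_{t-1}{(1)}$, leaves the freshly sampled stage-$t$ block undisturbed, so that the left tail of $X_t$ genuinely only contracts. The secondary point of care is the sign bookkeeping in $\hat z_t\le z_t$: the hypothesis in \eqref{asA} yields the desired inequality only when $\tilde\sigma_t>\breve\sigma_t$ (with the degenerate case $\tilde\sigma_t=\breve\sigma_t$ requiring $\tilde\mu_t\le\breve\mu_t$ handled separately), so I would state explicitly that the guarantee is intended for the regime in which the model-based conditional variance $\tilde\sigma_t^2$ dominates the true conditional variance $\breve\sigma_t^2$.
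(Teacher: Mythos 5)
Your core chain of inequalities is exactly the paper's: reduce via \Cref{thm:general} to the stage-wise bound \eqref{eq:itm}, re-standardize from $(\tilde{\mu}_t,\tilde{\sigma}_t)$ to $(\breve{\mu}_t,\breve{\sigma}_t)$ using the second condition in \eqref{asA} (you correctly identify that this condition is precisely the comparison of the two standardized thresholds, valid in the regime $\tilde{\sigma}_t>\breve{\sigma}_t$), bound the result by $\Phi(z_t)$ using the first condition, finish with $\Phi(z_t)\le\Phi(q_t)=\Delta_t$ from \eqref{eq:crt}, and remove the ruin-conditioning by a truncation/stochastic-domination argument. One caveat on that last step: the paper does \emph{not} reuse \Cref{prop:upg} but applies \Cref{lemma:sdt} directly, because under \Cref{def:ind} the stage-$t$ block $\sT_t{(1)}-\sT_t{(0)}$ is genuinely independent of $\ST_{t-1}{(0)}$ given $\Fc_{t-1}$. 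Your claim that the proof of \Cref{prop:upg} ``relies only on conditional independence and monotonicity'' mischaracterizes it: under the Gaussian model of \Cref{def:normalmean}, $\sT_t{(0)}$ and $\ST_{t-1}{(0)}$ are \emph{dependent} given $\Fc_{t-1}$ (through the posterior of $\mu_{\mathrm{true}}{(0)}$; see the nonzero off-diagonal of $V$ in \eqref{eq:cCjointG}), which is why that proof needs the Gaussian conditional-regression decomposition. Under \Cref{def:ind} the domination is in fact easier, and your heuristic for it is sound; it just should be routed through \Cref{lemma:sdt}, not \Cref{prop:upg}.

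The genuine gap is that you never verify the first branch of \eqref{eq:recb}. \Cref{thm:general} \emph{requires} $\Tc_t=\emptyset$ on the event $\PP(R_{t-1}>B\mid\Fc_{t-1})=0$, so to invoke it you must show that whenever \Cref{alg:dcr} outputs $m_t\neq 0$ (an $\Fc_{t-1}$-measurable decision) we have $\PP(R_{t-1}>B\mid\Fc_{t-1})>0$ almost surely. This same positivity is what makes the ruin-conditional probability in the second branch well-defined, and it is a hypothesis of \Cref{lemma:sdt} (which needs $\PP(X<a)>0$), so your domination step cannot even be stated without it. The paper devotes an induction over stages to exactly this point (its condition \eqref{TS1}): at a stage with $m_t\neq 0$, the just-established bound $\PP(\ST_t{(1)}-\ST_t{(0)}\le \bug_t\mid\Fc_{t-1})\le\Delta_t<1$ is used to argue that the posterior non-ruin probability remains positive, while at a stage with $m_t=0$ the non-ruin probability is inherited from the previous stage. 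Your parenthetical about the degenerate case $t=1$ does not cover this; as written, the hypotheses of \Cref{thm:general} have not been checked and the theorem cannot be applied. Adding this induction closes the argument and makes your proof essentially identical to the paper's.
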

% We now explain implications of \Cref{thm:universal} when the number of treated units $M_{t}$ are sufficiently large. More detailed discussion of the following can be found in \Cref{appendix:nonid}. 
We expect the first condition in \eqref{asA} to hold as a consequence of central limit theorem for independent but non-identical random variables. Suppose $\Delta_t \le 0.5,\forall t$ (\emph{i.e.,} $z_t\le 0$). By law of large number for independent but non-identical random variables, the second condition in \eqref{asA} holds if (i) we have chosen prior and model parameters conservatively such that
\[
\begin{aligned}
& \mu_0{(1)}-\mu_0{(0)} \leq \frac{1}{m_t} \sum_{i \in \mathcal{T}_1} \mathbb{E}\left(Y_{i, 1}{(1)}-Y_{i, 1}{(0)}\right) \\
& \sigma{(0)}^2+\sigma{(1)}^2+m_t \cdot\left(\sigma_{0}{(1)}^2+\sigma_{0}{(0)}^2\right) \geq \frac{1}{m_t} \sum_{i \in \mathcal{T}_t} \mathbb{V}\left(Y_{i,t}{(1)}-Y_{i,t}{(0)}\right)
\end{aligned}
\]
and (ii) if the treatment effects increase or stay roughly constant throughout the experiments
\[
\frac{1}{m_t} \sum_{i \in \mathcal{T}_t} \mathbb{E}\left(Y_{i,t}{(1)}-Y_{i,t}{(0)}\right) \geq \frac{1}{M_{t-1}^{(1)}} \sum_{r \in[t-1]} \sum_{i \in \mathcal{T}_r} \mathbb{E}\left[Y_{i,t}{(1)}-Y_{i,t}{(0)}\right]
\]
and our variance estimates $\sigma{(0)}^2, \sigma{(1)}^2$ are accurate or conservative in the sense that
\[
\begin{aligned}
& \sigma{(0)}^2 \geq \frac{1}{M_{t-1}^{(1)}} \sum_{r \in[t-1]} \sum_{i \in \mathcal{T}_r} \mathbb{V}\left[Y_{i,t}{(0)}\right], \;\; \sigma{(0)}^2+\sigma{(1)}^2 \geq \frac{1}{m_t} \sum_{i \in \mathcal{T}_t} \mathbb{V}\left(Y_{i,t}{(1)}-Y_{i,t}{(0)}\right).
\end{aligned}
\]
In summary, under \Cref{def:ind}, the validity of \Cref{alg:dcr} depends on the accuracy and conservatism of the model's estimates based on past stages for the true treatment effect and volatility in the next stage. The algorithm's effectiveness may be compromised when there is a sudden decrease in treatment effect or a surge in outcome volatility in the next stage; see discussion in \Cref{appendix:nonid}. 
%This may result in the model assigning treatment based on past stages, leading to higher-than-expected costs. See \Cref{appendix:nonid} for a more detailed discussion. 

\section{Numerical and empirical experiments}
\begin{figure}
     \centering
     % ROW1
     \begin{subfigure}[b]{0.32\textwidth}
         \centering
         \includegraphics[width=\textwidth]{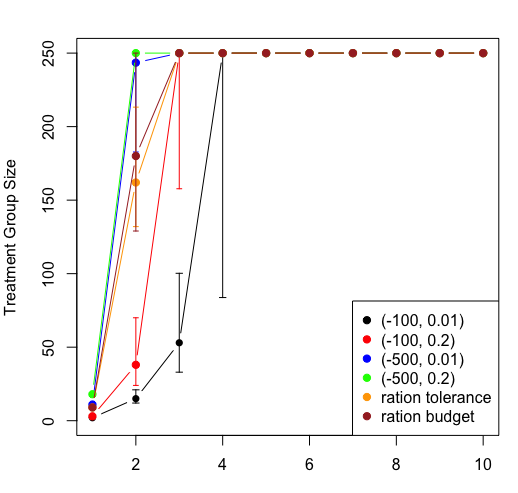}
         \label{fig:11}
         \vspace*{-5mm}
         \caption{Ramp Schedule, \textsf{PTE}}
     \end{subfigure}
     \hfill
     \begin{subfigure}[b]{0.32\textwidth}
         \centering
         \includegraphics[width=\textwidth]{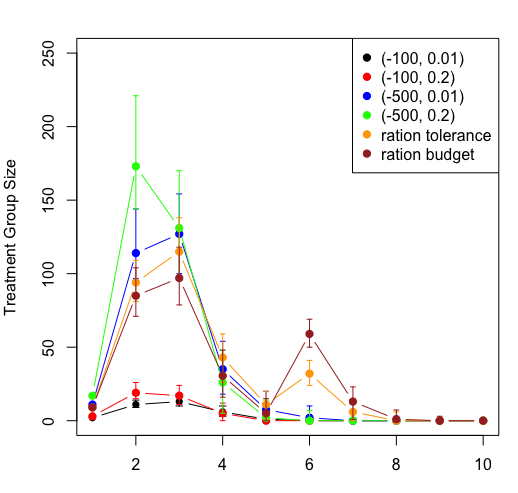}
         \label{fig:12}
         \vspace*{-5mm}
         \caption{Ramp Schedule, \textsf{NTE}}
     \end{subfigure}
     \hfill
     \begin{subfigure}[b]{0.32\textwidth}
         \centering
         \includegraphics[width=\textwidth]{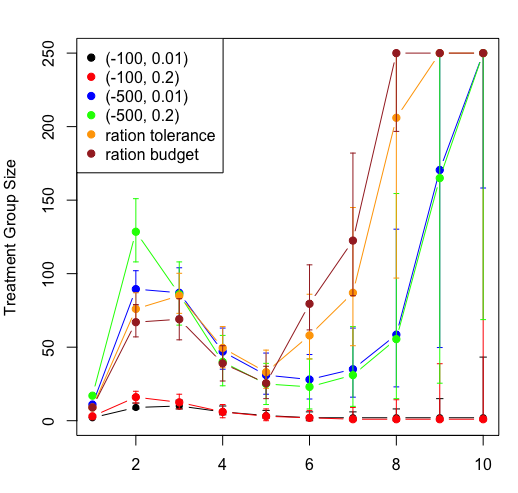}
         \label{fig:13}
         \vspace*{-5mm}
         \caption{Ramp Schedule, \textsf{NPTE}}
     \end{subfigure}
     
    % ROW2  
     \begin{subfigure}[b]{0.32\textwidth}
         \centering
         \includegraphics[width=\textwidth]{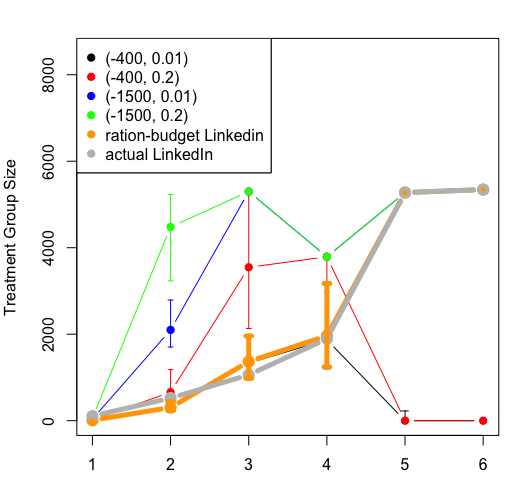}
         \label{fig:21}
         \vspace*{-5mm}
         \caption{Ramp schedule, \textsf{LinkedIn}}
     \end{subfigure}
     \hfill
     \begin{subfigure}[b]{0.32\textwidth}
         \centering
         \includegraphics[width=\textwidth]{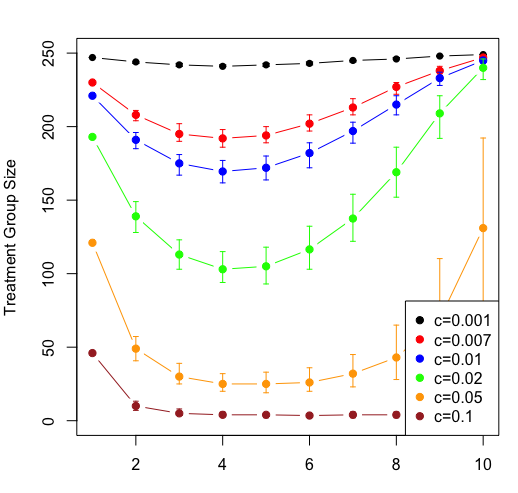}
         \label{fig:32}
         \vspace*{-5mm}
         \caption{Ramp Schedule, \textsf{NPTE, TOM}}
     \end{subfigure}
     \hfill
     \begin{subfigure}[b]{0.32\textwidth}
         \centering
         \includegraphics[width=\textwidth]{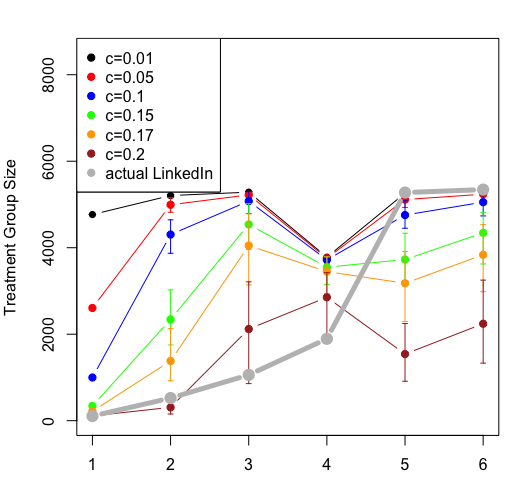}
         \label{fig:31}
         \vspace*{-5mm}
         \caption{Ramp schedule, \textsf{LinkedIn, TOM}}
     \end{subfigure}
     
     % ROW3
     \begin{subfigure}[b]{0.32\textwidth}
         \centering
         \includegraphics[width=\textwidth]{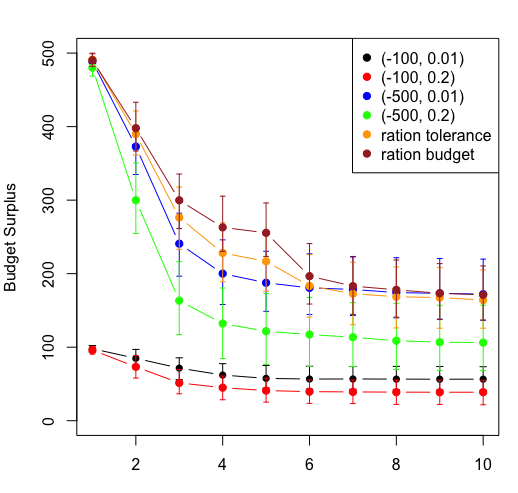}
         \label{fig:22}
         \vspace*{-5mm}
         \caption{Budget surplus, \textsf{NTE}}
     \end{subfigure}
    \hfill
    \begin{subfigure}[b]{0.32\textwidth}
         \centering
         \includegraphics[width=\textwidth]{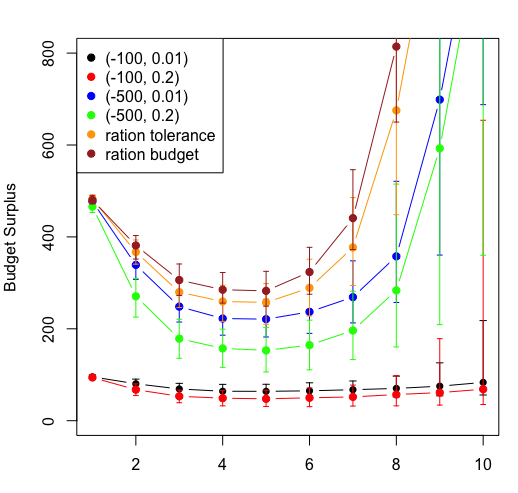}
         \label{fig:23}
         \vspace*{-5mm}
         \caption{Budget surplus, \textsf{NPTE}}
     \end{subfigure}
     \hfill
     \begin{subfigure}[b]{0.32\textwidth}
         \centering
         \includegraphics[width=\textwidth]{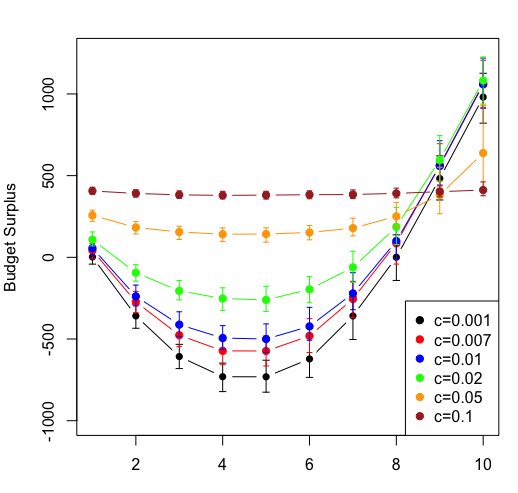}
         \label{fig:33}
         \vspace*{-5mm}
         \caption{Budget surplus, \textsf{NPTE, TOM}}
     \end{subfigure}
     \hfill
     
       \caption{Line plots (a)---(i) show the median, 25\%, 75\% quantiles of either the treatment group sizes or the budget surplus for the 500 simulations of the different experiment setups (\textsf{PTE, NTE, NPTE, LinkedIn}) using our model and Thompson-sampling Bayesian bandit. Under the legends ``$(B,\delta)$'', we set $\bug_t=B, \Delta_t=1-(1-\delta)^{1 / T},\forall t$. We also use (i) ``\textsf{ration budget}'' to denote $(B, \delta)=(-500,0.01), \bug_t=-400,\forall t\le 5, \bug_t=-500,\forall t> 5$ and $\Delta_t=1-(1-\delta)^{1 / T},\forall t$; (ii) ``\textsf{ration tolerance}'' to denote $(B, \delta)=(-500,0.01), \bug_t=-500,\forall t$ and $\Delta_t=0.0001,\forall t\le 5, \Delta_t=0.0019,\forall t>5$ (iii) ``\textsf{actual LinkedIn}'' to denote the actual ramp up schedule used by LinkedIn data scientists (iv) ``\textsf{ration-budget Linkedin}'' to denote $(B,\delta)=(-1500, 0.01), \bug_t=-400, t\ge 4, \bug_t=-1500, t>4, \Delta_t=1-(1-\delta)^{1 / T}$. Particularly, (e), (f), (i) are results using Thompson-sampling Bayesian bandit with different values of tuning parameter $c$, denoted by ``\textsf{TOM}'' in sub-caption, for experiment \textsf{NPTE, LinkedIn}. We set $B=-500$ to produce (h), although the model is not budget-aware. } 
        
        \label{figure1}
\end{figure}

\begin{figure}
     \centering
     \begin{subfigure}[b]{0.18\textwidth}
         \centering
         \includegraphics[width=\textwidth]{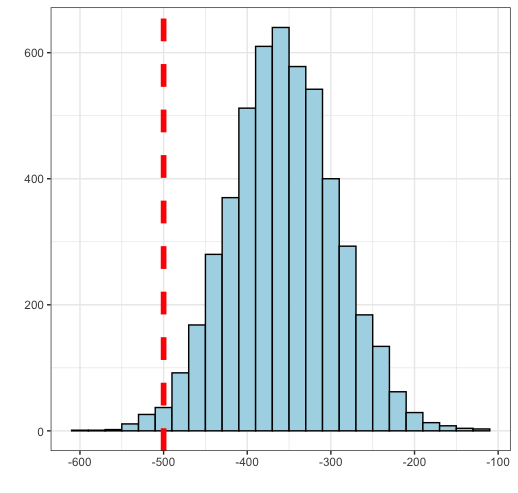}
         \label{fig:41}
         \vspace*{-5mm}
         \caption{\textsf{nor}:1.22\%/5\%}
     \end{subfigure}
     \hfill
     \begin{subfigure}[b]{0.18\textwidth}
         \centering
         \includegraphics[width=\textwidth]{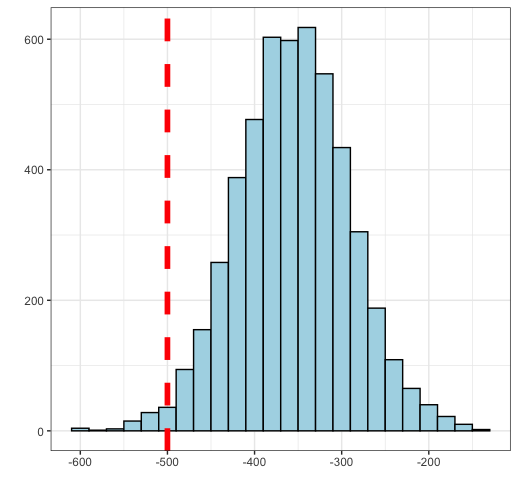}
         \label{fig:42}
         \vspace*{-5mm}
         \caption{\textsf{corr}:1.52\%/5\%}
     \end{subfigure}
     \hfill
     \begin{subfigure}[b]{0.18\textwidth}
         \centering
         \includegraphics[width=\textwidth]{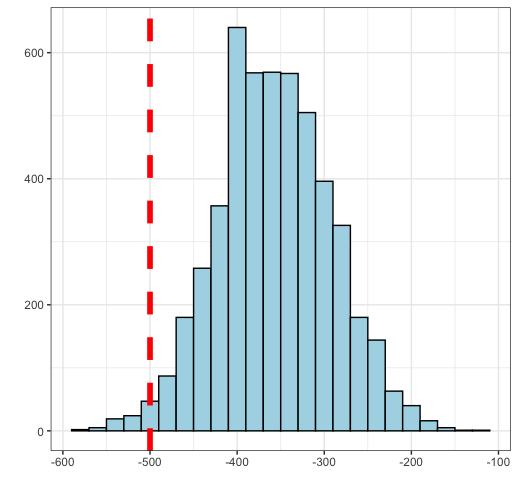}
         \label{fig:43}
         \vspace*{-5mm}
         \caption{\textsf{bern}:1.30\%/5\%}
     \end{subfigure}
     \hfill
     \begin{subfigure}[b]{0.18\textwidth}
         \centering
         \includegraphics[width=\textwidth]{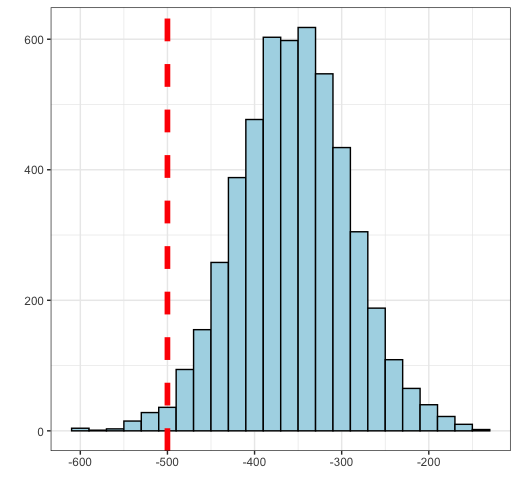}
         \label{fig:44}
         \vspace*{-5mm}
         \caption{\textsf{fat}:1.24\%/5 \%}
     \end{subfigure}
     \hfill
     \begin{subfigure}[b]{0.18\textwidth}
         \centering
         \includegraphics[width=\textwidth]{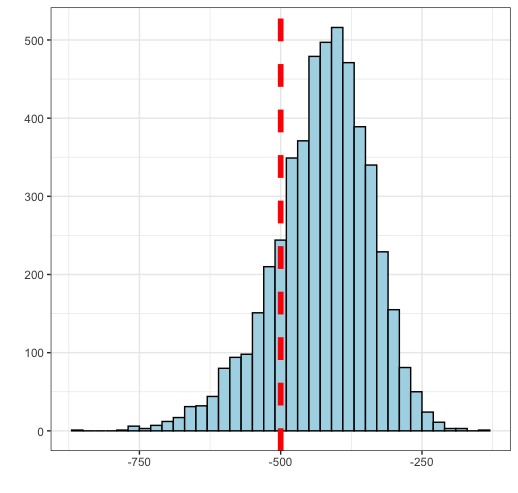}
         \label{fig:45}
         \vspace*{-5mm}
         \caption{\textsf{dec}:18.28\%/5\%}
     \end{subfigure}
     
       \caption{The above shows distribution of the budget used over 5000 simulations. The red dashed line marks the budget available $B=-500$. ``$x$\% / $5\%$'' in the sub-captions denotes that the actual risk of ruin is $x\%$ and the ruin tolerance is $\delta=5\%$.} 
        
        \label{figure2}
\end{figure}

\paragraph{Simulated ramp schedule} 
We now examine the following three experimental scenarios, for each $\qty(Y_{i,t}{(1)}, Y_{i,t}{(0)})_{i,t}$ are iid sampled from \eqref{eq:nmid} with variance $\sigma{(1)}^2=\sigma{(0)}^2=10$ and means given below:
\begin{enumerate}[label=\roman*)]
    \item \textsf{PTE}: Positive treatment effect, with $\mu_{\text{true}}{(0)}=0$, $\mu_{\text{true}}{(1)}=1$;
    \item \textsf{NTE}: Negative treatment effect, with $\mu_{\text{true}}{(0)}=1$ and $\mu_{\text{true}}{(1)}=0$;
    \item \textsf{NPTE}: Negative to positive treatment effect, with  $\mu_{\text{true}}{(1)}(t)=\min\qty(-2+0.5(t-1),2)$. %In other words, the treatment effect increases from $-2$ in the first stage by $0.5$ for each subsequent stage until it reaches $+2$.
\end{enumerate}
% (i) \textsf{PTE}: The treatment effect is a constant positive value. To be precise, $\qty(Y_{i,t}{(1)}, Y_{i,t}{(0)})_{i,t}$ are independently and identically sampled from \eqref{eq:nmid} with $\mu_{\text{true}}{(0)}=0$, $\mu_{\text{true}}{(1)}=1$, and $\sigma{(1)}^2=\sigma{(0)}^2=10$; (ii) \textsf{NTE}: The treatment effect is a constant positive value. Same as in (i) except that $\mu_{\text{true}}{(0)}=1$ and $\mu_{\text{true}}{(1)}=0$; (iii) \textsf{NPTE}: The treatment effect is negative and then positive. Same as in (i) except that the treatment mean $\mu_{\text{true}}{(1)}=\mu_{\text{true}}{(1)}(t)$ changes with $t$ such that $\mu_{\text{true}}{(1)}(t)=\min\qty(-2+0.5(t-1),2)$. In other words, the treatment effect increases from $-2$ in the first stage by $0.5$ for each subsequent stage until it reaches $+2$.
For each scenario, we set $T=10$ with $N_t=500,\forall t$ and we choose non-informative prior $\mu_0{(w)}=0, \sigma_{0}{(w)}^2=100, w=0,1$. We assume model variance is known; however, using \eqref{embvar} to estimate the variance gives similar results. We repeat each scenario 500 times. \Cref{figure1} (a)---(c) show median, 25\% and 75\% quantile of the simulated ramp schedules $(m_t)_{t\in [T]}$ and (g),(h) show the budget surpluses $\sum_{r \in[t]} \sum_{i \in \mathcal{T}_r} \left(Y_{i,t}{(1)}-Y_{i,t}{(0)}\right)-B$ produced given different choices of $B,\delta, (\bug_t)_{t\in [T]}, (\Delta)_{t\in [T]}$. 

Across the various scenarios, our model gives a reasonable ramp schedule. Large $B,\delta$ typically leads to more treated units and faster ramp-up. For the \textsf{NPTE} scenario, inadequate budget and low ruin tolerance can result in a failure to ramp up to 50\% ($m_t=250$). We also found that reserving the budget for later stages by decreasing $\bug_t$ or $\Delta_t$ in the initial stages leads to a faster ramp-up because more budget is available to support a swift increase when the treatment effect turns positive. This suggests that the experimenter may want to consider reserving some budget for later stages if the treatment effect $\mu_{p,t}{(1)}-\mu_{p,t}{(0)}$ has not stabilized.

For \textsf{PNTE} scenario, we compare our method to a Thompson-sampling bandit with tuning parameters $c$ and prior $\mu_0{(1)}=-2, \mu_0{(0)}=0, \sigma_{0}{(0)}^2=\sigma_{0}{(1)}^2=0.05$ (see \cite{thall2007practical} and \Cref{appendix:thomp} for details). The prior is chosen so that the bandit can initialize conservatively depending on $c$. It can be seen in \Cref{figure1} (e),(i) that the ramp schedule generated is rather sub-optimal and does not respect the budget. It also follows a rigid pattern where with small $c$, the ramp-up initializes too aggressively, and for large $c$, the ramp-up proceeds too conservatively. These results demonstrate that our approach significantly outperforms the main existing alternative. 

\paragraph{Semi-real LinkedIn ramp schedule comparison}
% \subsection{Semi-real LinkedIn experiment } 
\Cref{appendix:data} gives group-level statistics from a 6-stage phased release run at LinkedIn. Due to privacy constraints, the individual-level data is not available and is simulated from \eqref{priorp} using stage-wise $\mu_{\text {true }}{(w)}, \sigma{(w)}^2, w=0,1$ (both unobserved). The ramp-up schedules for different tuning parameters are shown in \Cref{figure1},(d). It is noteworthy that the ramp-up schedule employed by LinkedIn's data scientists, which was chosen without considering a specific budget, is roughly consistent with the budget-rationing schedule denoted as "\textsf{ration-budget Linkedin}" in the caption. Our results suggest that deducing the budget and risk tolerance associated with an experiment retroactively using our method is possible. We also run the experiment using Thompson sampling Bayesian bandit with the same prior as for \textsf{NPTE} above. In \Cref{figure1}(f), we again observe the rigidity issue: with small $c$, the ramp-up initializes too aggressively, and for large $c$, the ramp-up proceeds too conservatively. 

\paragraph{Budget-spent distribution} 
% \subsection{Budget-spent distribution} 
To explore how our algorithms controls the risk of ruin and budget spending, we simulate following experiments for 5,000 times and plot distribution of the budget spent $R_t$ in \Cref{figure2}:     (i)\textsf{norm}: $\qty(Y_{i,t}{(1)}, Y_{i,t}{(0)})_{i,t}$ are sampled iid from \eqref{eq:nmid} with $\mu_{\text {true }}{(0)}=0, \mu_{\text {true }}{(1)}=1, \sigma{(0)}^2=\sigma{(1)}^2=10$;
    (ii) \textsf{corr}: same as \textsf{norm} except that for each $i,t$, $Y_{i,t}{(1)}$ is correlated with $Y_{i,t}{(0)}$ with correlation coefficient 0.8 
    (iii) \textsf{bern}: $Y_{i,t}{(0)} \stackrel{\text { iid }}{\sim} 6.4 \textsf{Bern}(p=0.5786)$ and $ Y_{i,t}{(1)} \stackrel{\text { iid }}{\sim} 6.4 \textsf{Bern}(p=0.4224)$; 
    (iv) \textsf{fat}: $Y_{i,t}{(0)} \stackrel{\mathrm{iid}}{\sim} 1+\sqrt{5} \textsf{t}_{4}, $ and $ Y_{i,t}{(1)} \stackrel{\mathrm{iid}}{\sim} \sqrt{5} {t}_{4}$\footnote{Where $\textsf{t}_{4}$ is a Student-t distribution with 4 degrees of freedom.}
    (v) \textsf{dec:} same as \textsf{norm} except $\mu_{\text {true }}{(1)}(t)=-(t-1)$.
Note that (iii), (iv) is configured so that $\mathbb{E} [Y_{i,t}{(1)}-Y_{i,t}{(0)}]=1, \mathbb{V}\left(Y_{i,t}{(0)}\right)=\mathbb{V}\left(Y_{i,t}{(1)}\right)=10$. For all the above experiments, we run $T=10$ stages with $N_t=500,\forall t$ and we use non-informative prior $\mu_0{(w)}=0, \sigma_{0}{(w)}^2=100, w=0,1$.

As shown in \Cref{figure2}, the model successfully controls risk of ruin for (i)---(iv). The actual ruin risk is at a reasonable level ($\sim1.2$\%) compared to the ruin tolerance given ($5$\%). Note that the actual ruin risk are close for different outcome distribution. This is a consequence of central limit theorem and law of large numbers as discussed in \Cref{valid}. The model fails to control risk of ruin for (v) as expected since the treatment effect keeps decreasing and the model assigns treatment based on past stages which leads to higher-than-expected costs (cf. \Cref{valid}).

\newpage
\bibliographystyle{plain}
\bibliography{ref}

\appendix

\section{Decompose risk-of-ruin to individual stages}\label{appendix:thmgeneralpf}

\begin{proof}[Proof of \Cref{thm:general}]
The last claim is trivial: it is easy to verify that \eqref{eq:reca} and \eqref{eq:recb} hold if we let $\Tc_t=\emptyset$ for all $t\in [T]$. Now we prove the first and the second claim. The case for $T=1$ is trivial. We assume $T\ge 2$. For any $t=2, \ldots, T$, we have that
\[
\begin{aligned}
\mathbb{P}\left(R_t \leq B\right)&= \mathbb{E} [\mathbb{P}\left(R_t \leq B \mid \mathcal{F}_{t-1}\right)] \\
&= \mathbb{E}\bigg[\mathbb{P}\left(R_t \leq B, R_{t-1} \leq B \mid \mathcal{F}_{t-1}\right) +\mathbb{P}\left(R_t \leq B, R_{t-1}>B \mid \mathcal{F}_{t-1}\right)\bigg] \\
& \stackrel{(a)}{\leq}  \mathbb{P}\left(R_{t-1} \leq B\right)+\mathbb{E}\left[\mathbb{P}\left(R_t \leq B, R_{t-1}>B \mid \mathcal{F}_{t-1}\right)\right] \\
& = \mathbb{P}\left(R_{t-1} \leq B\right) +\mathbb{E}\bigg[\mathbb{I}\left(\mathbb{P}\left(R_{t-1}>B \mid \mathcal{F}_{t-1}\right)>0\right)\\
& \qquad \quad \times \mathbb{P}\left(R_t \leq B \mid R_{t-1}>B, \mathcal{F}_{t-1}\right) \mathbb{P}\left(R_{t-1}>B \mid \mathcal{F}_{t-1}\right) \\
&\qquad \quad+\mathbb{I}\left(\mathbb{P}\left(R_{t-1}>B \mid \mathcal{F}_{t-1}\right)=0\right)\mathbb{P}\left(R_t \leq B, R_{t-1}>B \mid \mathcal{F}_{t-1}\right)\bigg] \\
& \stackrel{(b)}{=} \mathbb{P}\left(R_{t-1} \leq B\right) +\mathbb{E}\bigg[\mathbb { I } \left(\mathbb{P} (R_{t-1}>B \mid \mathcal{F}_{t-1}>0)\right) \\
& \qquad \quad \times \mathbb{P}\left(R_t \leq B \mid R_{t-1}>B, \mathcal{F}_{t-1}\right) \mathbb{P}\left(R_{t-1}>B \mid \mathcal{F}_{t-1}\right)\bigg] \\
& \stackrel{(c)}{\le} \mathbb{P}\left(R_{t-1} \leq B\right)+\Delta_t \mathbb{E}\bigg[\mathbb{I}\left(\mathbb{P}\left(R_{t-1}>B \mid \mathcal{F}_{t-1}\right)>0\right) \mathbb{P}\left(R_{t-1}>B \mid \mathcal{F}_{t-1}\right)\bigg] \\
& = \mathbb{P}\left(R_{t-1} \leq B\right)+\Delta_t \mathbb{P}\left(R_{t-1}>B\right)
\end{aligned}
\]
where $(b)$ used that $\mathbb{P}\left(R_{t-1}>B \mid \mathcal{F}_{t-1}\right)=0 \Rightarrow \mathcal{T}_t=\emptyset$ and that $r_t((Y_{i,t})_{i\in \Nc_t},\emptyset)=0$, which implies that almost surely
\[
\begin{aligned}
&\mathbb{I}\left(\mathbb{P}\left(R_{t-1}>B \mid \mathcal{F}_{t-1}\right)=0\right) \cdot \mathbb{P}\left(R_t \leq B, R_{t-1}>B \mid \mathcal{F}_{t-1}\right) \\
&=\mathbb{I}\left(\mathbb{P}\left(R_{t-1}>B \mid \mathcal{F}_{t-1}\right)=0\right)\mathbb{P}\left(R_{t-1}+r_t\left(\left(Y_{i, t}\right)_{i \in \mathcal{N}}, \emptyset\right) \leq B, R_{t-1}>B \mid \mathcal{F}_{t-1}\right) \\
&=\mathbb{I}\left(\mathbb{P}\left(R_{t-1}>B \mid \mathcal{F}_{t-1}\right)=0\right) \cdot \mathbb{P}\left(R_{t-1} \leq B, R_{t-1}>B \mid \mathcal{F}_{t-1}\right) \\
&=0
\end{aligned}
\]
and $(c)$ used that $\bug_t \geq B$, which implies that almost surely
\[
\mathbb{P}\left(R_t \leq B \mid R_{t-1}>B, \mathcal{F}_t\right) \leq \mathbb{P}\left(R_t \leq \bug_t \mid R_{t-1}>B, \mathcal{F}_t\right) \leq \Delta_t.
\]
Rearranging this, we obtain a recurrence relation: for any $t=2,...,T$, 
\begin{equation}\label{eq:recur}
\mathbb{P}\left(R_t > B\right) \geq\left(1-\Delta_t\right) \cdot \mathbb{P}\left(R_{t-1} >B\right).
\end{equation}

Using the recurrence relation repeatedly for all $t\in [T]$, we obtain
\begin{equation*}
\begin{aligned}
&\mathbb{P}\left(R_T > B\right) \geq \prod_{i=2}^T\left(1-\Delta_t\right) \cdot \mathbb{P}\left(R_1 > \bug_1\right) \geq \prod_{t=1}^T\left(1-\Delta_t\right) \\
&\implies \mathbb{P}\left(R_T\le B\right)\le 1-\prod_{t=1}^T\left(1-\Delta_t\right) \leq \delta
\end{aligned}
\end{equation*}
as required. To prove the second claim, observe that equality is attained in all of the above inequalities if equality is attained in \eqref{eq:recur}, $(i), (ii)$ and $(iii)$, and that equality is attained in \eqref{eq:recur} if equality is attained in $(a)$ and $(c)$. Finally, note that equality in $(a)$ is attained if $r_t\le 0,\forall t \in [T]$ and equality in $(c)$ is attained if equality is attained in $(i)$ and $(iv)$. 
\end{proof}

\section{Stochastic domination}\label{appendix:SDpf}
\begin{Lemma}[Stochastic domination under truncation]\label{lemma:sdt}
For any two independent real random variable $X, Z$ and real number $a, t \in \mathbb{R}$ such that $\mathbb{P}(X<a)>0$, we have that
\[
\mathbb{P}(X+Z \geq t \mid X<a) \leq \mathbb{P}(X+Z \geq t).
\]
\end{Lemma}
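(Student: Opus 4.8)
The plan is to reduce the statement to a one–dimensional stochastic-dominance fact about $X$ alone and then transport it across the independent variable $Z$ by conditioning. First I would establish the key claim that truncating $X$ to the event $\{X<a\}$ makes it stochastically smaller, i.e.\ that for every $s\in\R$,
\[
\PP(X\ge s\mid X<a)\le \PP(X\ge s).
\]
For $s\ge a$ the left-hand side is $0$ and there is nothing to prove. For $s<a$, write $p=\PP(s\le X<a)$, $q=\PP(X<s)$, and $r=\PP(X\ge a)$, so that $p+q+r=1$ and $\PP(X<a)=p+q$. The claim becomes $\tfrac{p}{p+q}\le p+r=1-q$, which rearranges to $0\le qr$ and therefore holds since $q,r\ge 0$. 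This elementary tail comparison is the analytic heart of the argument.

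Next I would use independence to express both sides as integrals against the law $F_Z$ of $Z$. The crucial point is that the conditioning event $\{X<a\}$ is measurable with respect to $X$ alone, so conditioning on it leaves the distribution of $Z$ unchanged and preserves the independence of $X$ and $Z$. Factoring the joint probability over $Z$ (Fubini) and dividing by $\PP(X<a)$ gives
\[
\PP(X+Z\ge t\mid X<a)=\int_{\R}\PP(X\ge t-z\mid X<a)\,dF_Z(z),
\]
and the same computation without the truncation yields $\PP(X+Z\ge t)=\int_{\R}\PP(X\ge t-z)\,dF_Z(z)$.

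Finally, I would insert the pointwise tail inequality from the first step with $s=t-z$ into the integrand and invoke monotonicity of the integral to conclude
\[
\PP(X+Z\ge t\mid X<a)\le \PP(X+Z\ge t),
\]
as required. The step I expect to require the most care is the de-conditioning: one must justify that conditioning on $\{X<a\}$ modifies only the $X$-marginal while keeping $Z$ independent with its original law, so that the decomposition over $Z$ is legitimate. Once that is in place, the remainder is a direct application of the stochastic-dominance estimate, and no regularity assumptions on $X$ or $Z$ beyond independence are needed.
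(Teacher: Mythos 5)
Your proof is correct, and it takes a genuinely different route from the paper's. The paper never isolates a fixed-threshold statement: it works directly with the random threshold $t-Z$, first proving the stronger comparison $\PP(X+Z\ge t\mid X<a)\le \PP(X+Z\ge t\mid X\ge a)$ by splitting each conditional probability according to whether $a\ge t-Z$ and using independence to factor out $\PP(a\ge t-Z)$, and then recovering the unconditional bound via the law of total probability over $\{X<a\}$ and $\{X\ge a\}$ (which forces a separate treatment of the degenerate case $\PP(X\ge a)=0$). You instead reduce to the scalar stochastic-dominance fact $\PP(X\ge s\mid X<a)\le \PP(X\ge s)$ for every fixed $s$ — settled by the elementary algebra $0\le qr$ — and then transport it across $Z$ by disintegrating both sides over the law of $Z$, using that conditioning on the $\sigma(X)$-measurable event $\{X<a\}$ leaves $Z$'s law and its independence from $X$ intact. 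Your modular version buys a reusable one-dimensional lemma, a transparent accounting of exactly where independence enters (only in the Fubini step), and uniform handling of the case $\PP(X\ge a)=0$; the paper's version avoids explicit disintegration and, as a by-product, yields the slightly stronger three-way ordering in which the law conditioned on $\{X\ge a\}$ dominates the unconditional law, which in turn dominates the truncated one. Either argument suffices for the way the lemma is invoked in the proof of the paper's Lemma 3.5.
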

\begin{proof} [Proof of \Cref{lemma:sdt} ]
    Assume that $\mathbb{P}(X \geq a)>0$, or else the proof is trivial. We first claim that $\mathbb{P}(X+Z \geq t \mid X<a) \leq \mathbb{P}(X+Z \geq t \mid X \geq a)$. Note that this holds if and only if
\[
\frac{\mathbb{P}(X \geq t-Z, X<a)}{\mathbb{P}(X<a)} \leq \frac{\mathbb{P}(X \geq t-Z, X \geq a)}{\mathbb{P}(X \geq a)} .
\]
The above holds since its lhs and rhs satisfies
\[
\begin{aligned}
\frac{\mathbb{P}(X \geq t-Z, X<a)}{\mathbb{P}(X<a)} & =\frac{\mathbb{P}(X \geq t-Z, X<a, a \geq t-Z)}{\mathbb{P}(X<a)} \leq \mathbb{P}(a \geq t-Z) \\
\frac{\mathbb{P}(X \geq t-Z, X \geq a)}{\mathbb{P}(X \geq a)} & =\frac{\mathbb{P}(X \geq t-Z, X \geq a, a<t-Z)}{\mathbb{P}(X \geq a)}+\mathbb{P}(a \geq t-Z)
\end{aligned}
\]
It then follows from law of total probability that
\[
\begin{aligned}
\mathbb{P}(X+Z  \geq t)&=\mathbb{P}(X+Z \geq t \mid X<a) \mathbb{P}(X<a) +\mathbb{P}(X+Z \geq t \mid X \geq a) \mathbb{P}(X \geq a) \\
& \geq \mathbb{P}(X+Z \geq t \mid X<a) \mathbb{P}(X<a) +\mathbb{P}(X+Z \geq t \mid X<a) \mathbb{P}(X \geq a) \\
& =\mathbb{P}(X+Z \geq t \mid X<a)
\end{aligned}
\]
as required.
\end{proof}

\begin{proof}[Proof of \Cref{prop:upg}]
    If $M_{t-1}^{(1)}=0$, \eqref{eq:stdm} holds with equality since $\ST_{t-1}{(0)}<\ST_{t-1}{(1)}-B\iff B<0$. So, assume $M_{t-1}^{(1)}>0$ from now on. By \eqref{eq:cCjointG} and the conditional distributions of multivariate Gaussian, we have
    \begin{equation*}
    \begin{aligned}
        \left[\sT_t{(0)}\bigg| \ST_{t-1}{(0)},\mathcal{F}_{t-1}\right] = &\bigg [\mu_2+V_{21}V_{11}^{-1}(\ST_{t-1}{(0)}-\mu_1)+(V_{22}-V_{21}V_{11}^{-1}V_{12})^{1/2} Z \bigg | \ST_{t-1}{(0)}, \Fc_{t-1} \bigg]
    \end{aligned}
    \end{equation*}
    where $Z\sim N(0,1)$ is independent of $\ST_{t-1}{(0)}$ conditioned on $\Fc_{t-1}$ and $\mu,V$ are defined in \eqref{eq:cCjointG}. Here, we used that $V_{11}>0$ since $\sigma_{p,t}(0)^2,\sigma{(0)}^2>0$ by \Cref{def:normalmean}, and $M_{t-1}^{(1)}\neq 0$. Using the above and that $\ST_t{(0)}=\sT_t{(0)}+\ST_{t-1}{(0)}$, we have
\begin{equation*}
\begin{aligned}
    \left[\ST_t{(0)}\bigg| \ST_{t-1}{(0)}, \mathcal{F}_{t-1}\right]=&\bigg[(V_{21}V_{11}^{-1}+1)\ST_{t-1}{(0)}+\mu_2-V_{21}V_{11}^{-1}\mu_1\\
    &+ (V_{22}-V_{21}V_{11}^{-1}V_{12})^{1/2} Z \bigg | \ST_{t-1}{(0)},\Fc_{t-1} \bigg].
\end{aligned}
\end{equation*}
Since $V_{21} V_{11}^{-1}+1>0$ in the above, using also that $\bug_t -\ST_{t-1}{(1)},\ST_{t-1}{(1)}-B\in \Fc_{t-1}$ and that $\sT_t{(1)}$ is independent of $\ST_{t-1}{(0)},\ST_t{(0)}$, \eqref{eq:stdm} follows from \Cref{lemma:sdt}. 
\end{proof}

\section{Derivation of the decision rule}\label{appendix:derivation}
Proof of these facts follows from standard Bayesian analysis (see e.g. \cite{gelman1995bayesian}) 
\begin{Lemma}[Posterior distributions]
    We have for $w=0,1, t\in [T]$
    \begin{subequations}
    \begin{align}
        &\mu_{p, t}{(1)}:=\E \bigg[\mu_{\mathrm{true }}{(1)} \bigg| \Fc_{t-1} \bigg]=\frac{1}{\frac{1}{\sigma_{0}{(1)}^2}+\frac{M_{t-1}^{(1)}}{\sigma{(0)}^2}}\left(\frac{\mu_0{(1)}}{\sigma_{0}{(1)}^2}+\frac{\ST_{t-1}{(1)}}{\sigma{(1)}^2}\right)\\
        &\mu_{p, t}{(0)}:=\E \bigg[\mu_{\mathrm{true }}{(0)} \bigg| \Fc_{t-1} \bigg]=\frac{1}{\frac{1}{\sigma_{0}{(0)}^2}+\frac{M_{t-1}^{(0)}}{\sigma{(0)}^2}}\left(\frac{\mu_0{(0)}}{\sigma_{0}{(0)}^2}+\frac{\SC_{t-1}(0)}{\sigma{(0)}^2}\right)\\
        &\sigma_{p, t}(w)^2:=\V \bigg[\mu_{\mathrm{true }}{(w)} \bigg| \Fc_{t-1} \bigg] =\left(\frac{1}{\sigma_{0}{(w)}^2}+\frac{M_{t-1}{(w)}}{\sigma{(w)}^2}\right)^{-1}\\
        & \bigg[\mu_{\mathrm{true }}{(w)} \bigg| \Fc_{t-1} \bigg]\sim N\qty(\mu_{p, t}{(w)},\; \sigma_{p, t}(w)^2) \label{eq:wawac}\\
        & \left[\sT_t{(1)} \bigg | \mathcal{F}_{t-1}\right] \sim N\left(\mu_{p, t}{(1)} \cdot m_t, \;m_t^2 \cdot \sigma_{p, t}(1)^2+m_t \cdot \sigma{(0)}^2\right) \label{eq:cto}\\
        & \left[\mqty(\ST_{t-1}{(0)} \\ \sT_t{(0)}) \bigg| \mathcal{F}_{t-1}\right] \sim N\left(\mu,V\right) \label{eq:cCjointG}
    \end{align}
    \end{subequations}
    where 
    \[
    \begin{aligned}
        &\mu:=\left(\begin{array}{c}
\mu_{p, t}{(0)} \cdot M_{t-1}^{(1)} \\
\mu_{p, t}{(0)} \cdot m_t
\end{array}\right), \\
&V:=\left(\begin{array}{cc}
(M_{t-1}^{(1)})^2 \sigma_{p, t}(0)^2+M_{t-1}^{(1)} \sigma{(0)}^2 & M_{t-1}^{(1)} m_t \sigma_{p, t}(0)^2 \\
M_{t-1}^{(1)} m_t \sigma_{p, t}(0)^2 & m_t^2 \sigma_{p, t}(0)^2+m_t \sigma{(0)}^2
\end{array}\right).
    \end{aligned}\]
\end{Lemma}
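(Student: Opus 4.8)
The plan is to treat this as a two-layer hierarchical Gaussian computation. In the first layer I pin down the posterior law of each parameter $\mu_{\mathrm{true}}{(w)}$ given $\Fc_{t-1}$; in the second layer I propagate that posterior through the conditionally Gaussian sums $\sT_t{(1)}$ and $\bigl(\ST_{t-1}{(0)},\sT_t{(0)}\bigr)$ to obtain the predictive laws \eqref{eq:cto} and \eqref{eq:cCjointG}. For the first layer, note that, conditional on $\Fc_{t-1}$, the information bearing on $\mu_{\mathrm{true}}{(1)}$ is exactly the $M_{t-1}^{(1)}$ treated-arm observations, whose sum is $\ST_{t-1}{(1)}$, and likewise the control arm contributes $\SC_{t-1}{(0)}$ from $M_{t-1}^{(0)}$ observations. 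Because the two coordinates $w=0,1$ are a priori independent and the outcome covariance in \Cref{def:normalmean} is diagonal, the posteriors decouple across $w$. Writing the joint density as prior times Gaussian likelihood and completing the square in $\mu_{\mathrm{true}}{(w)}$ yields the standard normal--normal update: a precision-weighted posterior mean and a posterior variance equal to the inverse of the summed precisions, which is precisely \eqref{eq:wawac} together with the three preceding displays.

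For the second layer I use the tower property. Conditional on $\mu_{\mathrm{true}}{(1)}$, the sum $\sT_t{(1)}=\sum_{i\in\Tc_t}Y_{i,t}{(1)}$ is a sum of $m_t$ i.i.d.\ $N\bigl(\mu_{\mathrm{true}}{(1)},\sigma{(1)}^2\bigr)$ draws that are independent of $\Fc_{t-1}$ given the parameter, since they are stage-$t$ outcomes and $\Tc_t$ is $\Fc_{t-1}$-measurable; hence $\bigl[\sT_t{(1)}\mid \mu_{\mathrm{true}}{(1)},\Fc_{t-1}\bigr]\sim N\bigl(m_t\,\mu_{\mathrm{true}}{(1)},\,m_t\sigma{(1)}^2\bigr)$. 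Marginalizing this Gaussian mean over the Gaussian posterior from the first layer gives a Gaussian whose mean is $m_t\,\mu_{p,t}{(1)}$ and whose variance, by the law of total variance, is $m_t^2\,\sigma_{p,t}(1)^2+m_t\,\sigma{(1)}^2$, establishing \eqref{eq:cto}.

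The joint law \eqref{eq:cCjointG} rests on one structural observation: $\ST_{t-1}{(0)}$ sums the (counterfactual) control potential outcomes of the $M_{t-1}^{(1)}$ units treated in stages $\le t-1$, whereas $\sT_t{(0)}$ sums those of the $m_t$ units treated at stage $t$, and these are disjoint unit sets. Thus, given $\mu_{\mathrm{true}}{(0)}$, the two sums are independent Gaussians with means $M_{t-1}^{(1)}\mu_{\mathrm{true}}{(0)}$ and $m_t\mu_{\mathrm{true}}{(0)}$ and variances $M_{t-1}^{(1)}\sigma{(0)}^2$ and $m_t\sigma{(0)}^2$. Integrating out the common $\mu_{\mathrm{true}}{(0)}\sim N\bigl(\mu_{p,t}{(0)},\sigma_{p,t}(0)^2\bigr)$ then produces a bivariate normal: the shared linear dependence on $\mu_{\mathrm{true}}{(0)}$ inflates each variance by $(\text{count})^2\,\sigma_{p,t}(0)^2$ and, by the law of total covariance, creates the cross term $M_{t-1}^{(1)}m_t\,\sigma_{p,t}(0)^2$, matching $\mu$ and $V$ as stated.

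The one point that is not purely mechanical, and which I expect to be the main obstacle, is justifying that the sequential, data-dependent choice of $m_r$ and $\Tc_r$ does not disturb the conjugate update. I would argue sequential ignorability: each assignment decision is $\Fc_{r-1}$-measurable, so given $\mu_{\mathrm{true}}$ it contributes a factor to the likelihood that is free of $\mu_{\mathrm{true}}$ and hence cancels against the normalizer in Bayes' rule. Equivalently, the realized counts and outcome sums are sufficient, so the posterior depends on the data only through $\bigl(M_{t-1}^{(w)},\ST_{t-1}{(1)},\SC_{t-1}{(0)}\bigr)$ exactly as in a fixed-design analysis, and all of $m_t$, $M_{t-1}^{(1)}$, $\ST_{t-1}{(1)}$, $\SC_{t-1}{(0)}$ may be treated as constants once we condition on $\Fc_{t-1}$. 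With this in hand, every remaining step is a routine Gaussian marginalization.
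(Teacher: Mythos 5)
Your proof is correct and takes essentially the same route as the paper, which offers no explicit derivation but simply cites standard conjugate Bayesian analysis: your two-layer argument (normal--normal precision-weighted update, then Gaussian marginalization via the laws of total variance and covariance, with the key observation that $\ST_{t-1}{(0)}$ and $\sT_t{(0)}$ are sums of \emph{unobserved} counterfactuals over disjoint unit sets sharing the common parameter $\mu_{\mathrm{true}}{(0)}$) is precisely that standard analysis, spelled out. Your closing point on sequential ignorability — that the $\Fc_{t-1}$-measurable, adaptively chosen sample sizes do not disturb the fixed-design conjugate update — is a detail the paper leaves implicit, and your version even silently corrects the statement's typos ($\sigma{(0)}^2$ appearing where $\sigma{(1)}^2$ is meant in the posterior precision and in \eqref{eq:cto}).
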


% \begin{proof}[Proof of \Cref{lemma:wfw}]
%     Using \eqref{eq:cto}, \eqref{eq:cCjointG} and the fact that $\sT_t{(1)},\ST_{t-1}{(0)}$ are independent conditioned on $\Fc_{t-1}$, it is straightforward to show that
%     \begin{equation*}
%         \qty[\sT_t{(1)}-\ST_t{(0)}\bigg|\Fc_{t-1}] \sim N(\tilde{\mu}_t,\tilde{\sigma}_t^2\left(m_t\right)).
%     \end{equation*}
% Now note that $\bug_t-\ST_{t-1}{(1)}, \tilde{\mu}_t\left(m_t\right), \tilde{\sigma}_t^2\left(m_t\right)\in \Fc_{t-1}$ and the lhs of \eqref{eq:itm} can be written as
% \begin{equation}\label{eq:itmsec}
%             \PP \bigg( Z \le \frac{\bug_t-\ST_{t-1}{(1)}-\tilde{\mu}_t\left(m_t\right)}{\sqrt{\tilde{\sigma}_t^2\left(m_t\right)}} \bigg | \Fc_{t-1} \bigg)\le \Delta_t
% \end{equation}
% where $Z\sim N(0,1)$ is independent everything else. The claim follows by applying normal quantile function on both sides. 
% \end{proof}

\section{Validity for non-identically distributed outcomes}\label{appendix:nonid}
\begin{proof}[Proof of \Cref{thm:universal}]
To show the experiment by \Cref{alg:dcr} is $(\delta,B)$-RRC under \Cref{def:ind}, it suffices to show that \eqref{eq:reca}, \eqref{eq:recb} hold for each $t \geq 1$. Since \eqref{eq:reca}, \eqref{eq:recb} hold for each $t \geq 1$ if $m_t=0$, we only need to show that for each $t \geq 1$, if $m_t \neq 0$, almost surely
\begin{subequations}
    \begin{align}
        & \mathbb{P}\left(\ST_t{(1)}-\ST_t{(0)}>B \mid \mathcal{F}_t\right)>0 \label{TS1} \\
& \mathbb{P}\left(\ST_t{(1)}-\ST_t{(0)} \leq \bug_t \mid \mathcal{F}_{t-1}, \ST_{t-1}{(1)}-\ST_{t-1}{(0)}>B\right) \leq \Delta_t. \label{TS2}
    \end{align}
\end{subequations}
Note that for each $t \geq 1$, if $m_t \neq 0$,
\[
\begin{aligned}
\mathbb{P}\left(\ST_t{(1)}-\ST_t{(0)} \leq \bug_t \mid \mathcal{F}_{t-1}\right) &=\mathbb{P}\left(\frac{\sT_t{(1)}-\ST_t{(0)}-\tilde{\mu}_t}{\tilde{\sigma}_t} \leq z_t \mid \mathcal{F}_{t-1}\right) \\
& \stackrel{(a)}{\leq} \mathbb{P}\left(\frac{\sT_t{(1)}-\ST_t{(0)}-\breve{\mu}_t}{\breve{\sigma}_t} \leq z_t \mid \mathcal{F}_{t-1}\right) \\
& \stackrel{(b)}{\leq} \Phi\left(z_t\right) \stackrel{(c)}{\leq} \Delta_t
\end{aligned}
\]
where we used first inequality in \eqref{asA} in (a), second inequality in \eqref{asA} in (b), and \eqref{eq:crt} in (c).

We now show \eqref{TS1} by induction. For $t=1$, if $m_1 \neq 0$, \Cref{alg:dcr} ensures that
\[
\mathbb{E}\left(\mathbb{P}\left(\sT_1{(1)}-\sT_1{(0)} \leq \bug_1 \mid \mathcal{F}_1\right)\right)=\mathbb{P}\left(\sT_1{(1)}-\sT_1{(0)} \leq \bug_1\right) \leq \Delta_1<1
\]
by construction, which implies that
\[
\mathbb{P}\left(\ST_1{(1)}-\ST_1{(0)}>B \mid \mathcal{F}_1\right) \geq \mathbb{P}\left(\sT_1{(1)}-\sT_1{(0)}>\bug_1 \mid \mathcal{F}_1\right)>0
\]
almost surely. If $m_1=0$, then $\mathbb{P}\left(\ST_1{(1)}-\ST_1{(0)}>B \mid \mathcal{F}_1\right)=1$ since $B<0$. This proves the base case. For the inductive case, if $m_t \neq 0$, \Cref{alg:dcr} ensures that
\[
\begin{aligned}
&\mathbb{E}\left(\mathbb{P}\left(\ST_t{(1)}-\ST_t{(0)} \leq \bug_t \mid \mathcal{F}_t\right) \mid \mathcal{F}_{t-1}\right) =\mathbb{P}\left(\ST_t{(1)}-\ST_t{(0)} \leq \bug_t \mid \mathcal{F}_{t-1}\right) \leq \Delta_t<1
\end{aligned}
\]
by construction, which implies that
\[
    \mathbb{P}\left(\ST_t{(1)}-\ST_t{(0)}>B \mid \mathcal{F}_t\right) \geq \mathbb{P}\left(\ST_t{(1)}-\ST_t{(0)}>\bug_t \mid \mathcal{F}_t\right)>0
\]
almost surely. If $m_t=0$, we have that
\[
\mathbb{P}\left(\ST_t{(1)}-\ST_t{(0)}>B \mid \mathcal{F}_t\right)=\mathbb{P}\left(\ST_{t-1}{(1)}-\ST_{t-1}{(0)}>B \mid \mathcal{F}_{t-1}\right)>0
\]
from inductive hypothesis. This shows \eqref{TS1}. 

To show \eqref{TS2}, note that under \Cref{def:ind}, 
\[
    \begin{aligned}
        &\left[\sT_t{(1)}-\ST_t{(0)} \mid \mathcal{F}_{t-1}, \ST_{t-1}{(0)}<\ST_{t-1}{(1)}-B\right] \\
        & \qquad \stackrel{d}{=} \sT_t{(1)}-\sT_t{(0)}-\left[\ST_{t-1}{(0)} \mid \mathcal{F}_{t-1}, \ST_{t-1}{(0)}<\ST_{t-1}{(1)}-B\right]
    \end{aligned}
\] 
On the rhs, $\sT_t{(1)}-\sT_t{(0)}$ is independent of 
\[\left[\ST_{t-1}{(0)} \mid \mathcal{F}_{t-1}, \ST_{t-1}{(0)}<\ST_{t-1}{(1)}-B\right]\]
and that $\ST_{t-1}{(1)}-B \in$ $\mathcal{F}_{t-1}$. It follows from these, \eqref{TS1} and \Cref{lemma:sdt} that
\[
\begin{aligned}
    \mathbb{P}&\left(\sT_t{(1)}-\sT_t{(0)}-\ST_{t-1}{(0)}\leq \bug_t  \bigg | \mathcal{F}_{t-1}, \ST_{t-1}{(0)}<\ST_{t-1}{(1)}-B\right) \\ 
    & \qquad \leq \mathbb{P}\left(\sT_t{(1)}-\ST_t{(0)} \leq \bug_t \mid \mathcal{F}_{t-1}\right)
\end{aligned}
\]
Therefore, for each $t \geq 1$, if $m_t \neq 0$,
\[
\mathbb{P}\left(\ST_t{(1)}-\ST_t{(0)} \leq \bug_t \mid \mathcal{F}_{t-1}, \ST_{t-1}{(1)}-\ST_{t-1}{(0)}>B\right) \leq \Delta_t
\]
as required. This concludes the proof.
\end{proof}

\paragraph{When are \eqref{asA} satisfied} Fix any $t\ge 1$ where $m_t\neq 0$. Note that
\[
    \left[\sT_t{(1)}-\ST_t{(0)} \mid \mathcal{F}_{t-1}\right]=\sum_{i \in \mathcal{T}_t}\left(Y_{i,t}{(1)}-Y_{i,t}{(0)}\right) -\sum_{r \in[t-1]} \sum_{i \in \mathcal{T}_r}\left[Y_{i, r}{(0)} \mid Y_{i, r}{(1)}\right]
\]
The summands on the rhs are independent random variables under \Cref{def:ind}. We thus expect that when $m_t$ or $M_{t-1}$ are sufficiently large,
\[
\frac{\left[\sT_t{(1)}-\ST_t{(0)} \mid \mathcal{F}_{t-1}\right]-\mathbb{E}\left[\sT_t{(1)}-\ST_t{(0)} \mid \mathcal{F}_{t-1}\right]}{\sqrt{\mathbb{V}\left[\sT_t{(1)}-\ST_t{(0)} \mid \mathcal{F}_{t-1}\right]}} \approx N(0,1)
\]
by central limit theorem under mild moment-growth conditions (e.g. Lyapunov's conditions). We thus expect that first condition in \eqref{asA} holds when $m_t$ or $M_{t-1}^{(1)}$ are sufficiently large for each $t\ge 1$.

We now focus on the second condition in \eqref{asA}. Suppose that $\Delta_t \leq 0.5$, which implies $z_t \leq 0$ by \eqref{eq:crt}. Note that we can write
\[
\begin{aligned}
& \breve{\mu}_t=\sum_{i \in \mathcal{T}_t} \mathbb{E}\left(Y_{i,t}{(1)}-Y_{i,t}{(0)}\right)-\sum_{r \in[t-1]} \sum_{i \in \mathcal{T}_r} \mathbb{E}\left[Y_{i,t}{(0)} \mid Y_{i,t}{(1)}\right] \\
& \breve{\sigma}_t^2=\sum_{r \in[t-1]} \mathbb{V}\left(Y_{i,t}{(1)}-Y_{i,t}{(0)}\right)+\sum_{i \in \mathcal{T}_r} \mathbb{V}\left[Y_{i,t}{(0)} \mid \sigma\left(Y_{i,t}{(1)}\right)\right]
\end{aligned}
\]
and
\[
\begin{aligned}
 \tilde{\mu}_t &=m_t\left(\mu_{p, t}{(1)}-\mu_{p, t}{(0)}\right)-\mu_{p, t}{(0)} M_{t-1}^{(1)} \\
\tilde{\sigma}_t^2 &= m_t \cdot\left(\sigma{(1)}^2+\sigma{(0)}^2\right)+M_{t-1}^{(1)} \cdot \sigma{(0)}^2+m_t^2 \cdot \sigma_{p, t}(1)^2+\left(m_t+M_{t-1}^{(1)}\right)^2 \cdot \sigma_{p, t}(0)^2.
\end{aligned}
\]
For $t=1$,
\[
\breve{\mu}_t=\sum_{i \in \mathcal{T}_t} \mathbb{E}\left(Y_{i,t}{(1)}-Y_{i,t}{(0)}\right), \quad \breve{\sigma}_t^2=\sum_{i \in \mathcal{T}_t} \mathbb{V}\left(Y_{i,t}{(1)}-Y_{i,t}{(0)}\right)
\]
and
\[
\begin{aligned}
    &\tilde{\mu}_t=m_t\left(\mu_0{(1)}-\mu_0{(0)}\right), \\
    &\tilde{\sigma}_t^2=m_t \cdot\left(\sigma{(0)}^2+\sigma{(1)}^2\right)+m_t^2 \cdot\left(\sigma_{0}{(1)}^2+\sigma_{0}{(0)}^2\right).
\end{aligned}
\]
So, second condition in \eqref{asA} holds for $t=1$ if we have chosen prior and model parameters such that
\[
\begin{aligned}
& \mu_0{(1)}-\mu_0{(0)} \leq \frac{1}{m_t} \sum_{i \in \mathcal{T}_1} \mathbb{E}\left(Y_{i, 1}{(1)}-Y_{i, 1}{(0)}\right) \\
& \sigma{(0)}^2+\sigma{(1)}^2+m_t \cdot\left(\sigma_{0}{(1)}^2+\sigma_{0}{(0)}^2\right) \geq \frac{1}{m_t} \sum_{i \in \mathcal{T}_t} \mathbb{V}\left(Y_{i,t}{(1)}-Y_{i,t}{(0)}\right)
\end{aligned}
\]
This corresponds to that we choose prior and model parameters conservatively in the sense that we do not overestimate treatment effect or underestimate its variability. Now fix any $t \geq 2$. From the law of large number, we expect that for $M_{t-1}^{(1)}$ sufficiently large
\[
\begin{aligned}
& \mu_{p, t}{(0)} \approx \frac{1}{M_{t-1}^{(1)}} \sum_{r \in[t-1]} \sum_{i \in \mathcal{T}_r} \mathbb{E}\left[Y_{i,t}{(0)}\right] \\
& \frac{1}{M_{t-1}^{(1)}} \sum_{r \in[t-1]} \sum_{i \in \mathcal{T}_r} \mathbb{E}\left[Y_{i,t}{(0)} \mid Y_{i,t}{(1)}\right] \approx \frac{1}{M_{t-1}^{(1)}} \sum_{r \in[t-1]} \sum_{i \in \mathcal{T}_r} \mathbb{E}\left[Y_{i,t}{(0)}\right] \\
& \mu_{p, t}{(1)}-\mu_{p, t}{(0)} \approx \frac{1}{M_{t-1}^{(1)}} \sum_{r \in[t-1]} \sum_{i \in \mathcal{T}_r} \mathbb{E}\left[Y_{i,t}{(1)}-Y_{i,t}{(0)}\right] \\
& \frac{1}{M_{t-1}^{(1)}} \sum_{r \in[t-1]} \sum_{i \in \mathcal{T}_r} \mathbb{V}\left[Y_{i,t}{(0)} \mid Y_{i,t}{(1)}\right] \approx \frac{1}{M_{t-1}^{(1)}} \sum_{r \in[t-1]} \sum_{i \in \mathcal{T}_r} \mathbb{E} \mathbb{V}\left[Y_{i,t}{(0)} \mid Y_{i,t}{(1)} \right] \\
& \qquad \leq \frac{1}{M_{t-1}^{(1)}} \sum_{r \in[t-1]} \sum_{i \in \mathcal{T}_r} \mathbb{V}\left[Y_{i,t}{(0)}\right] \\
&
\end{aligned}
\]
So if the treatment effects increase or stay roughly constant throughout the experiments
\[
\frac{1}{m_t} \sum_{i \in \mathcal{T}_t} \mathbb{E}\left(Y_{i,t}{(1)}-Y_{i,t}{(0)}\right) \geq \frac{1}{M_{t-1}^{(1)}} \sum_{r \in[t-1]} \sum_{i \in \mathcal{T}_r} \mathbb{E}\left[Y_{i,t}{(1)}-Y_{i,t}{(0)}\right]
\]
and our variance estimates $\sigma{(0)}^2, \sigma{(1)}^2$ are accurate or conservative in the sense that
\[
\begin{aligned}
& \sigma{(0)}^2 \geq \frac{1}{M_{t-1}^{(1)}} \sum_{r \in[t-1]} \sum_{i \in \mathcal{T}_r} \mathbb{V}\left[Y_{i,t}{(0)}\right], \;\; \sigma{(0)}^2+\sigma{(1)}^2 \geq \frac{1}{m_t} \sum_{i \in \mathcal{T}_t} \mathbb{V}\left(Y_{i,t}{(1)}-Y_{i,t}{(0)}\right)
\end{aligned}
\]
the second condition in \eqref{asA} holds for each $t\ge2$ and the experiment produced by \Cref{alg:dcr} is $(\delta, B)$-RRC.

\section{Algorithm for general Bayesian models and costs}\label{appendix:GGBayesm}

The following outcome model is a generalization of \Cref{def:normalmean}. Here, experiment outcomes are allowed to be multivariate with each coordinate corresponds a different business metric. 
\begin{Definition}[General Bayesian model]\label{def:gbayes}
    Fix $p,q\ge 1$. The model parameter $\theta_{\mathrm{true}}\in \R^q$ is generated from certain prior $\pi_0$. The experiment outcome of unit $i$ at stage $t$ are distributed independently and identically as 
    \[\bigg(Y_{i,t}(0), Y_{i,t}(0) \bigg) \stackrel{\text{iid}}{\sim} p({\theta_{\mathrm{true}}}) 
    \]
    where $Y_{i,t}(0), Y_{i,t}(0) \in \R^q$ and $p({\theta_{\mathrm{true}}})$ is a probability distribution on $\R^{p\times p}$. 
\end{Definition}

The following is a generalization of \Cref{def:exprisk}. It allows for general experiment cost beyond treatment effect. The cost of treating unit $i$ is now $h_{it}=h_t\qty(Y_{i, t}{(1)}, Y_{i, t}{(0)})$ for some function $h_t:\R^{p\times p}\mapsto \R$ chosen by the user. For instance, $h_t$ can be chosen to compute the worst treatment effect across multiple business metrics. 

\begin{Definition}[General experiment cost] \label{def:grisk}
For each $t\ge 1$, let the experiment cost from stage-$t$ and treated unit $i$ be $h_{it}=h_t\qty(Y_{i, t}{(1)}, Y_{i, t}{(0)})$ where $h_t:\R^{p\times p}\mapsto \R$ is any user-chosen function. Then define $r_t:=\sum_{i\in \Tc_t} h_{i,t}$. We let $r_t=0$ if $\Tc_t=\emptyset$. Define the cumulative experiment cost up to stage $t$ as $R_t:=\sum_{\ta \in [t]} r_\ta$. 
\end{Definition}

We now move to derive an explicit algorithm \Cref{alg:dcr} from \Cref{thm:general} that output $(m_t)_{t\ge 1}$ such that the experiment is $(\delta,B)$-RRC. Compared to \Cref{alg:dcr}, the algorithm developed in this section will require Monte-Carlo simulations and generally gives more conservative ramp schedule. 

We first review the Cantelli's inequality, which is an improved version of the well-known Chebyshev's inequality for one-sided tail bounds.
\begin{Lemma}[Cantelli's inequality]
For any $\lambda \geq 0$, and real-valued random variable $X$ with finite variance,
\[
\mathbb{P}(X-\mathbb{E}(X) \geq \lambda) \leq \frac{1}{1+\lambda^2 / \mathbb{V}(X)}
\]
\end{Lemma}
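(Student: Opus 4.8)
The plan is to prove Cantelli's inequality by combining a shift-and-square trick with Markov's inequality, followed by an optimization over the introduced shift parameter. First I would reduce to the centered case: setting $Y := X - \E(X)$, the claim becomes $\PP(Y \ge \lambda) \le 1/(1 + \lambda^2/\sigma^2)$ with $\sigma^2 := \V(X) = \E(Y^2)$. The boundary case $\lambda = 0$ is immediate because the right-hand side equals $1$, so I would assume $\lambda > 0$; the degenerate case $\sigma^2 = 0$ is handled by observing that then $Y = 0$ almost surely and the left-hand side vanishes for $\lambda > 0$.

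The core step is to introduce a free parameter $u \ge 0$ and observe the event inclusion
\[
\{Y \ge \lambda\} \subseteq \{(Y + u)^2 \ge (\lambda + u)^2\},
\]
which holds because on the event $Y \ge \lambda$ we have $Y + u \ge \lambda + u \ge 0$, so squaring preserves the inequality. Applying Markov's inequality to the nonnegative random variable $(Y+u)^2$ then yields
\[
\PP(Y \ge \lambda) \le \frac{\E[(Y+u)^2]}{(\lambda + u)^2} = \frac{\sigma^2 + u^2}{(\lambda + u)^2},
\]
where I used $\E[Y] = 0$ to drop the cross term.

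Finally I would optimize this bound over $u \ge 0$. Differentiating $f(u) = (\sigma^2 + u^2)/(\lambda + u)^2$ and setting the numerator of $f'(u)$ to zero gives $u\lambda = \sigma^2$, so the minimizer is $u^\star = \sigma^2/\lambda > 0$; substituting back and simplifying collapses the expression to $\sigma^2/(\sigma^2 + \lambda^2) = 1/(1 + \lambda^2/\sigma^2)$, which is the claimed bound.

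The only genuinely nontrivial ingredient is the shift-and-square observation in the second step: once the event inclusion is in place, everything reduces to Markov's inequality and one-variable calculus. I expect the main thing to watch is the sign bookkeeping — ensuring $\lambda + u \ge 0$ so that squaring is monotone on the relevant event — rather than any delicate estimate.
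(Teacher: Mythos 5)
Your proof is correct, and it is the standard argument for Cantelli's inequality: center the variable, use the event inclusion $\{Y \ge \lambda\} \subseteq \{(Y+u)^2 \ge (\lambda+u)^2\}$ valid for $u \ge 0$, apply Markov's inequality, and minimize over $u$ to get the optimal shift $u^\star = \sigma^2/\lambda$. Note that the paper itself supplies no proof of this lemma --- it is stated as a review of a classical one-sided refinement of Chebyshev's inequality before being applied to bound $\mathbb{P}\left(R_t \leq \bug_t \mid R_{t-1}>B, \mathcal{F}_{t-1}\right)$ --- so your argument simply fills in the omitted classical derivation; your handling of the edge cases $\lambda = 0$ and $\V(X) = 0$ is also sound.
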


Given that (i) $\mathbb{P}\left(R_{t-1} \geq B \mid \mathcal{F}_{t-1}\right)>0$ and that (ii) $\mathbb{E}\left[R_t \mid R_{t-1} \geq B, \mathcal{F}_{t-1}\right] \geq \bug_t$, a direct application of Cantelli's inequality shows that
\[
\begin{aligned}
& \mathbb{P}\left(R_t \leq \bug_t \mid R_{t-1}>B, \mathcal{F}_{t-1}\right) \\
&\;\; =\mathbb{P}\bigg(\mathbb{E}\left[R_t \mid R_{t-1}>B, \mathcal{F}_{t-1}\right]-R_t \geq \mathbb{E}\left[R_t \mid R_{t-1}>B, \mathcal{F}_{t-1}\right]-\bug_t \bigg | R_{t-1}>B, \mathcal{F}_{t-1}\bigg) \\
&\;\; \leq \qty(1+\frac{\left(\mathbb{E}\left[R_t \mid R_{t-1}>B, \mathcal{F}_{t-1}\right]-\bug_t\right)^2}{\mathbb{V}\left(R_t \mid R_{t-1}>B, \mathcal{F}_{t-1}\right)})^{-1}
\end{aligned}
\]
where $\mathcal{F}_0$ denotes trivial $\sigma$-algebra. 

Our strategy to construct an algorithm that selects ramp size $m_t$ such that \eqref{eq:reca}, \eqref{eq:recb} hold is as follows: we first verify that condition (i) holds; if not, set $m_t=0$ and otherwise find $m_t$ such that the following two inequalities hold
\begin{subequations}\label{twoimpineqs}
\begin{align}
& \mathbb{E}\left[R_t \mid R_{t-1} \geq B, \mathcal{F}_{t-1}\right] \geq \bug_t \\
& \frac{1}{1+\frac{\left(\mathbb{E}\left[R_t \mid R_{t-1}>B, \mathcal{F}_{t-1}\right]-\bug_t\right)^2}{\mathbb{V}\left(R_t \mid R_{t-1}>B, \mathcal{F}_{t-1}\right)}} \leq \Delta_t
\end{align}
\end{subequations}
To accomplish this, note that by exchangeability of the outcomes under \Cref{def:gbayes},
\begin{equation}\label{expcpa}
\begin{aligned}
\mathbb{E}\left[R_t \mid R_{t-1} \geq B, \mathcal{F}_{t-1}\right]&=\mathbb{E}\left[r_t \mid R_{t-1} \geq B, \mathcal{F}_{t-1}\right]+\mathbb{E}\left[R_{t-1} \mid R_{t-1} \geq B, \mathcal{F}_{t-1}\right] \\
& =m_t \mathbb{E}\left[h_{i=1, t} \mid R_{t-1} \geq B, \mathcal{F}_{t-1}\right]+\mathbb{E}\left[R_{t-1} \mid R_{t-1} \geq B, \mathcal{F}_{t-1}\right]
\end{aligned}
\end{equation}
and
\begin{equation}\label{expcpb}
    \begin{aligned}
\mathbb{V}\left(R_t \mid R_{t-1} \geq B, \mathcal{F}_{t-1}\right) &=\mathbb{V}\left(r_t \mid R_{t-1} \geq B, \mathcal{F}_{t-1}\right)+\mathbb{V}\left(R_{t-1} \mid R_{t-1} \geq B, \mathcal{F}_{t-1}\right) \\
&\qquad +\operatorname{Cov}\left(r_t, R_{t-1} \mid R_{t-1} \geq B, \mathcal{F}_{t-1}\right) \\
& =m_t \mathbb{V}\left(h_{i=1, t} \mid R_{t-1} \geq B, \mathcal{F}_{t-1}\right)\\
& \qquad +m_t\left(m_t-1\right) \operatorname{Cov}\left(h_{i=1, t}, h_{i=2, t} \mid R_{t-1} \geq B, \mathcal{F}_{t-1}\right) \\
& \qquad +\mathbb{V}\left(R_{t-1} \mid R_{t-1} \geq B, \mathcal{F}_{t-1}\right)+\operatorname{Cov}\left(r_t, R_{t-1} \mid R_{t-1} \geq B, \mathcal{F}_{t-1}\right)
    \end{aligned}
\end{equation}
We thus require a Monte-Carlo procedure to output estimates $\hat{\varphi}_t{(0)}, \ldots, \hat{\varphi}_t^{(6)}$ for the following posterior quantities on the rhs of \eqref{expcpa}, \eqref{expcpb}
\begin{equation*}
\begin{aligned}
& \mathbb{P}\left(R_{t-1} \geq B \mid \mathcal{F}_{t-1}\right) \leftarrow \hat{\varphi}_t{(0)} \\
& \mathbb{E}\left(h_{i=1, t} \mid R_{t-1} \geq B, \mathcal{F}_{t-1}\right) \leftarrow \hat{\varphi}_t{(1)} \\
& \mathbb{E}\left(R_{t-1} \mid R_{t-1} \geq B, \mathcal{F}_{t-1}\right) \leftarrow \hat{\varphi}_t^{(2)} \\
& \mathbb{V}\left(h_{i=1, t} \mid R_{t-1} \geq B, \mathcal{F}_{t-1}\right) \leftarrow \hat{\varphi}_t^{(3)} \\
& \operatorname{Cov}\left(h_{i=1, t}, h_{i=2, t} \mid R_{t-1} \geq B, \mathcal{F}_{t-1}\right) \leftarrow \hat{\varphi}_t^{(4)} \\
& \mathbb{V}\left[R_{t-1} \mid R_{t-1} \geq B, \mathcal{F}_{t-1}\right] \leftarrow \hat{\varphi}_t^{(5)} \\
& \operatorname{Cov}\left(h_{i=1, t}, R_{t-1} \mid R_{t-1} \geq B, \mathcal{F}_{t-1}\right) \leftarrow \hat{\varphi}_t^{(6)}
\end{aligned}
\end{equation*}
where $h_{i=1, t}, h_{i=2, t}$ denote costs from treating two units $i=1,2$ at stage $t$. Recall that under (...), the outcome of the units are exchangeable. So $i=1,2$ simply refers to any two distinct units. These quantities will be used to construct estimates of $\mathbb{E}\left[R_t \mid R_{t-1}>B, \mathcal{F}_{t-1}\right]$ and $\mathbb{V}\left(R_t \mid R_{t-1}>B, \mathcal{F}_{t-1}\right)$ as functions of $m_t$ chosen.

We now outline a procedure to construct $\hat{\varphi}_t{(0)}, \ldots, \hat{\varphi}_t^{(6)}$. Firstly, suppose that we can obtain $K$ samples from the posterior distribution
\begin{equation}\label{pospreimp}
\left[\left(Y_{i, r}(0)\right)_{i \in \mathcal{T}_r, r \in[t-1]}, Y_{i=1, t}(0), Y_{i=1, t}(1), Y_{i=2, t}(0), Y_{i=2, t}(1) \bigg | \mathcal{F}_{t-1}\right],
\end{equation}
from certain MCMC algorithms. The specific details of the MCMC algorithm will depend on the Bayesian model used, but generating posterior-predictive samples while imputing unobserved data, as required in \eqref{pospreimp}, is a common objective of such algorithms (see e.g. \cite[Chapter 18]{gelman1995bayesian}). Let us denote the $K$ samples as
\begin{equation}\label{posteriorsamples}
\left(Y_{i, r}^{\{k\}}(0)\right)_{i \in \mathcal{T}_r, r \in[t-1]},\left(Y_{i, t}^{\{k\}}(0)\right), Y_{i=1, t}^{\{k\}}(1), Y_{i=2, t}^{\{k\}}(0), Y_{i=2, t}^{\{k\}}(1), \quad k=1, \ldots, K
\end{equation}
These will give us $K$ samples from $\left[h_{i=1, t}, h_{i=2, t}, R_{t-1} \mid \mathcal{F}_{t-1}\right]$ as follows:
\begin{equation*}
\begin{aligned}
\left(\hat{h}_{i=1, t}^{\{k\}}, \hat{h}_{i=2, t}^{\{k\}},\hat{R}_{t-1}^{\{k\}}\right)  =&\bigg(h_t\left(Y_{i=1, t}^{\{k\}}(1)-Y_{i=1, t}^{\{k\}}(0)\right), h_t\left(Y_{i=2, t}^{\{k\}}(1)-Y_{i=2, t}^{\{k\}}(0)\right),\\
& \quad \sum_{r=1}^{t-1} \sum_{i \in \mathcal{T}_r} h_r\left(Y_{i, r}^{\{k\}}(1)-Y_{i, r}^{\{k\}}(0)\right)\bigg ), \quad k=1, \ldots, K
\end{aligned}
\end{equation*}
Then we can estimate $\mathbb{P}\left(R_{t-1} \geq B \mid \mathcal{F}_{t-1}\right)$ by
\begin{equation*}
\mathbb{P}\left(R_{t-1} \geq B \mid \mathcal{F}_{t-1}\right) \leftarrow \hat{\varphi}_t{(0)}=\frac{1}{K} \sum_{k=1}^K \mathbb{I}\left(\hat{R}_{t-1}^{\{k\}} \geq B\right)
\end{equation*}
Let
\begin{equation*}
\mathcal{L}_t:=\left\{k \in[K]: \widehat{R}_{t-1}^{\{k\}} \geq B\right\} \subset[K]
\end{equation*}
which denotes the subset of the $K$ Monte-Calor samples for which the budgets are not depleted. 

If $\hat{\varphi}_t{(0)}=0\iff \mathcal{L}_t=\emptyset$, we can simply out $m_t=0$ since this corresponds to the case that the condition (i) does not hold, i.e. $\mathbb{P}\left(R_t \leq \bug_t \mid R_{t-1}>B, \mathcal{F}_{t-1}\right) \approx 0$. Otherwise, we continue to construct $\hat{\varphi}_t{(1)}, \ldots, \hat{\varphi}_t^{(6)}$ as follows:
\begin{equation}\label{rangeeqs}
\begin{aligned}
& \mathbb{E}\left(h_{i=1, t} \mid R_{t-1} \geq B, \mathcal{F}_{t-1}\right) \leftarrow \hat{\varphi}_t{(1)}=\frac{1}{|\mathcal{L}_t|} \sum_{k \in \mathcal{L}_t} \hat{h}_{i=1, t}^{\{k\}} \\
& \mathbb{E}\left(R_{t-1} \mid R_{t-1} \geq B, \mathcal{F}_{t-1}\right) \leftarrow \hat{\varphi}_t^{(2)}=\frac{1}{|\mathcal{L}_t|} \sum_{k \in \mathcal{L}_t} \hat{R}_{t-1}^{\{k\}} \\
& \mathbb{V}\left(h_{i=1, t} \mid R_{t-1} \geq B, \mathcal{F}_{t-1}\right) \leftarrow \hat{\varphi}_t^{(3)}=\frac{1}{|\mathcal{L}_t|} \sum_{k \in \mathcal{L}_t}\left(\hat{h}_{i=1, t}^{k k\}}\right)^2-\left(\hat{\varphi}_t{(1)}\right)^2 \\
& \operatorname{Cov}\left(h_{i=1, t}, h_{i=2, t} \mid R_{t-1} \geq B, \mathcal{F}_{t-1}\right) \\
& \qquad \qquad \leftarrow \hat{\varphi}_t^{(4)}=\frac{1}{|\mathcal{L}_t|} \sum_{k \in \mathcal{L}_t} \hat{h}_{i=1, t}^{\{k\}} \hat{h}_{i=2, t}^{\{k\}}-\hat{\varphi}_t{(1)}\left(\frac{1}{|\mathcal{L}_t|} \sum_{k \in \mathcal{L}_t} \hat{h}_{i=2, t}^{\{k\}}\right) \\
& \mathbb{V}\left[R_{t-1} \mid R_{t-1} \geq B, \mathcal{F}_{t-1}\right] \leftarrow \hat{\varphi}_t^{(5)}=\frac{1}{|\mathcal{L}_t|} \sum_{k \in \mathcal{L}_t}\left(\hat{R}_{t-1}^{\{k\}}\right)^2-\left(\hat{\varphi}_t^{(2)}\right)^2 \\
& \operatorname{Cov}\left(h_{i=1, t}, R_{t-1} \mid R_{t-1} \geq B, \mathcal{F}_{t-1}\right) \leftarrow \hat{\varphi}_t^{(6)}=\frac{1}{|\mathcal{L}_t|} \sum_{k \in \mathcal{L}_t} \hat{h}_{i=1, t}^{\{k\}} \hat{h}_{i=2, t}^{\{k\}}-\hat{\varphi}_t{(1)} \hat{\varphi}_t^{(2)}
\end{aligned}
\end{equation}
From \eqref{expcpa}, \eqref{expcpb} and the Monte-Carlo estimates above, we then have estimators for $$\mathbb{E}\left[R_t \mid R_{t-1} \geq B, \mathcal{F}_{t-1}\right], \mathbb{V}\left[R_t \mid R_{t-1} \geq B, \mathcal{F}_{t-1}\right]$$ in terms of $\hat{\varphi}_t{(1)}, \ldots, \hat{\varphi}_t^{(6)}$ as follows
\begin{equation*}
\begin{aligned}
& \mathbb{E}\left[R_t \mid R_{t-1} \geq B, \mathcal{F}_{t-1}\right] \leftarrow m_t \cdot \hat{\varphi}_t{(1)}+\hat{\varphi}_t^{(2)} \\
& \mathbb{V}\left[R_t \mid R_{t-1} \geq B, \mathcal{F}_{t-1}\right] \leftarrow\left(m_t \cdot \hat{\varphi}_t^{(3)}+m_t\left(m_t-1\right) \cdot \hat{\varphi}_t^{(4)}\right)+\hat{\varphi}_t^{(5)}+m_t \cdot \hat{\varphi}_t^{(6)}
\end{aligned}
\end{equation*}
The two inequalities in \eqref{twoimpineqs} then become
\begin{subequations}
\begin{align}
& m_t \cdot \hat{\varphi}_t{(1)}+\hat{\varphi}_t^{(2)} \geq \bug_t \\
& \frac{1}{1+\frac{\left(m_t \cdot \hat{\varphi}_t{(1)}+\hat{\varphi}_t^{(2)}-\bug_t\right)^2}{\left(m_t \cdot \hat{\varphi}_t^{(3)}+m_t\left(m_t-1\right) \cdot \hat{\varphi}_t^{(4)}\right)+\hat{\varphi}_t^{(5)}+m_t \cdot \hat{\varphi}_t^{(6)}}} \leq \Delta_t \label{goddfdb}
\end{align}
\end{subequations}
respectively. Assume that $\Delta_t>0$ or else set $m_t=0$ directly. Observe that \eqref{goddfdb} can be written as, with $q_t:=\Delta_t^{-1}-1$,
\begin{equation*}
A_t m_t^2+B_t m_t+C_t \geq 0
\end{equation*}
where
\begin{equation}\label{ABCg}
\begin{aligned}
A_t & :=\left(\hat{\varphi}_t{(1)}\right)^2-q_t \hat{\varphi}_t^{(4)} \\
B_t & :=2 \widehat{\varphi}_t{(1)}\left(\hat{\varphi}_t^{(2)}-\bug_t\right)-q_t \hat{\varphi}_t^{(3)}+q_t \hat{\varphi}_t^{(4)}-q_t \hat{\varphi}_t^{(6)} \\
C_t & :=\left(\hat{\varphi}_t^{(2)}-\bug_t\right)^2-q_t \hat{\varphi}_t^{(5)}
\end{aligned}
\end{equation}
Then one can choose $m_t$ to be the largest, positive integer in the range defined by
\begin{equation*}
m_t \cdot \hat{\varphi}_t{(1)}+\hat{\varphi}_t^{(2)} \geq \bug_t, \quad A_t m_t^2+B_t m_t+C_t \geq 0
\end{equation*}
If the range does not contain any positive integer, we set $m_t=0$.
Note that the range can be easily identified after solving the quadratic equation $A_tm_t^2+B_tm_t+C_t=0$. \Cref{alg:gdcr} gives the algorithm that outputs ramp sizes adaptively. Note that by construction, it gives a $(\delta,B)$-RRC experiments if the Monte-Carlo estimators are sufficiently accurate.
\begin{algorithm}[H]
\caption{Output ramp size adaptively}\label{alg:gdcr}
\begin{algorithmic}[1]
\Require $B<0$, $\delta \in [0,1)$
\State \textbf{Initialize} $t\gets 1, \prod_{r=1}^{0}\left(1-\Delta_r\right) \gets 1$
\While{$\prod_{r=1}^{t-1}\left(1-\Delta_r\right) > 1-\delta$}
\State \textbf{choose} $\Delta_t \in\left[0, \frac{1-\delta}{\prod_{r=1}^{t-1}\left(1-\Delta_r\right)}-1\right], \bug_t\ge B$
\State \textbf{run} MCMC to obtain posterior samples in \eqref{posteriorsamples} and computes $\hat{\varphi}_t{(0)}$
\If{$\hat{\varphi}_t{(0)}\gets 0$} $m_t\gets 0$
\Else
\State \textbf{compute} $\hat{\varphi}_t{(1)}, \ldots, \hat{\varphi}_t^{(6)}$ using \eqref{rangeeqs} and then $A_t,B_t, C_t$ by \eqref{ABCg}
\State \textbf{find} $\mathcal{V}_t\gets \qty{m\in \N_+ \cap [0,N_t/2]: m \cdot \hat{\varphi}_t{(1)}+\hat{\varphi}_t^{(2)} \geq \bug_t, A_t m^2+B_t m+C_t \geq 0}$
\If{$\mathcal{V}_t\neq \emptyset$}
\State $m_t\gets \max \mathcal{V}_t$
\Else
\State $m_t\gets 0$
\EndIf
\EndIf
\State \textbf{Output} $m_t$ \text{ and then conduct stage $t$-experiment and observe the outcomes}
\State \textbf{update} $t\gets t+1$
\EndWhile
\end{algorithmic}
\end{algorithm}
We have conducted preliminary simulations of the proposed procedure for a multivariate Gaussian outcome model with Gaussian-inverse-Wishart prior, and observed satisfactory results. However, we defer presenting numerical results until future work when a more systematic investigation of Monte-Carlo based procedures can be conducted.

\section{Linkedin experiment data}\label{appendix:data}
In \Cref{tab:LinkedIn} below, $\mu_{\text {true }}{(w)}, \sigma{(w)}^2, w=0,1$ are sample statistics from the actual LinkedIn experiment. $N_t$ are incoming population size reduced by $10^4$ factor for tractability on a personal computer. 
\begin{table}[H]
    \centering
    \begin{tabular}{ccccccc}
\toprule
Stages $t$ & 1 & 2 & 3 & 4 & 5 & 6 \\
\midrule
$\mu_{\text{true}}{(0)}$ & 0.3648 & 0.3780 & 0.3752 & 0.2317 & 0.4009 & 0.3930 \\
$\mu_{\text{true}}{(1)}$ & 0.3659 & 0.3788 & 0.3754 & 0.2317 & 0.4010 & 0.3941 \\
$\sigma{(0)}^2$ & 2.0993 & 2.2769 & 2.0909 & 1.1165 & 2.2705 & 2.3982 \\
$\sigma{(1)}^2$ & 2.0923 & 2.2248 & 2.0135 & 1.0526 & 2.2476 & 2.4430 \\
$N_t$ & 10,756 & 10,460 & 10,598 & 7,580 & 10,550 & 10,688 \\
\bottomrule
\end{tabular}
    \caption{Linkedin experiment data}
    \label{tab:LinkedIn}
\end{table}

\section{Thompson-sampling based Bayesian bandit}\label{appendix:thomp}
This algorithm is developed in \cite[Section 4]{thall2007practical} for clinical trials. The algorithm assigns a user $i$ at stage $t\ge 1$ to treatment with probability 
\[\mathbb{P}\left(i \in \mathcal{T}_t\right)=\frac{\mathbb{P}\left(\mu_{\text {true }}{(1)}>\mu_{\text {true }}{(0)} \mid \mathcal{F}_{t-1}\right)^c}{\mathbb{P}\left(\mu_{\text {true }}{(1)}>\mu_{\text {true }}{(0)} \mid \mathcal{F}_{t-1}\right)^c+\mathbb{P}\left(\mu_{\text {true }}{(1)} \leq \mu_{\text {true }}{(0)} \mid \mathcal{F}_{t-1}\right)^c}\]
for tuning parameter $c>0$. Under \Cref{def:normalmean}, by \eqref{eq:wawac}, we have that
\begin{equation*}
\mathbb{P}\left(\mu_{\text {true }}{(1)}>\mu_{\text {true }}{(0)} \mid \mathcal{F}_{t-1}\right)=\Phi\left(\frac{\mu_{p, t}{(1)}-\mu_{p, t}{(0)}}{\sqrt{\sigma_{p, t}(0)^2+\sigma_{p, t}(1)^2}}\right).
\end{equation*}

\end{document}